\definecolor{pink}{rgb}{0.58,0,0.83}
\definecolor{orange}{rgb}{1,0.5,0}
\definecolor{lightgreen}{rgb}{0.2, 0.8, 0.2}
\definecolor{lightyellow}{rgb}{0.84, 0.65, 0.13}
\newcommand{\mc}{\textcolor{blue}}
\newcommand{\mt}{\textit}
\newenvironment{proof}{{\indent \indent \it Proof:\quad}}{\hfill $\blacksquare$\par}
\newtheorem{remark}{Remark}
\newtheorem{thm}{Theorem}
\definecolor{ForestGreen}{RGB}{34,139,34}
\def\BibTeX{{\rm B\kern-.05em{\sc i\kern-.025em b}\kern-.08em
		T\kern-.1667em\lower.7ex\hbox{E}\kern-.125emX}}
\begin{document}
	\title{Learning-based Near-optimal Motion Planning for Intelligent Vehicles with Uncertain Dynamics}
	\author{Yang~Lu,
		Xinglong~Zhang,~\IEEEmembership{Member,~IEEE},
		Xin~Xu$^\star$,~\IEEEmembership{Senior Member,~IEEE},
		Weijia~Yao,~\IEEEmembership{Member,~IEEE}
		\thanks{Yang Lu, Xinglong Zhang, and Xin Xu are with the College of Intelligence Science and Technology, National University of Defense Technology, Changsha, Hunan, 40073 P.R. China. (e-mail: luyang18@mail.sdu.edu.cn, zhangxinglong18@nudt.edu.cn, xuxin$\_$mail@263.net).

		Weijia Yao is with the School of Robotics, Hunan University, Changsha, Hunan, 410082, P.R. China.
		}}
		
		\maketitle
		\begin{abstract}
			Motion planning has been an important research topic in achieving safe and flexible maneuvers for intelligent vehicles. However, it remains challenging to realize efficient and optimal planning in the presence of uncertain model dynamics. In this paper, a sparse kernel-based reinforcement learning (RL) algorithm with Gaussian Process (GP) Regression (called GP-SKRL) is proposed to achieve online adaption and near-optimal motion planning performance. In this algorithm, we design an efficient sparse GP regression method to learn the uncertain dynamics. Based on the updated model, a sparse kernel-based policy iteration algorithm with an exponential barrier function is designed to learn the near-optimal planning policies with the capability to avoid dynamic obstacles. Thereby, batch-mode GP-SKRL with online adaption capability can estimate the changing system dynamics. The converged RL policies are then deployed on vehicles efficiently under a safety-aware module. As a result, the produced driving actions are safe and less conservative, and the planning performance has been noticeably improved. Extensive simulation results show that GP-SKRL outperforms several advanced motion planning methods in terms of average cumulative cost, trajectory length, and task completion time. In particular, experiments on a Hongqi E-HS3 vehicle demonstrate that superior GP-SKRL provides a practical planning solution.
		\end{abstract}
	\begin{IEEEkeywords}
	Motion planning, uncertain dynamics, kernel feature, reinforcement learning, Gaussian process.
	\end{IEEEkeywords}

\section{Introduction}
	Intelligent vehicles are promising in reducing traffic accidents due to human factors. As an important and indispensable part of autonomous driving, kinodynamic motion planning algorithms help improve the safety and maneuvering capabilities of intelligent vehicles by planning multistep-ahead trajectory profiles under the dynamic constraints and state limitations\cite{lavalle2001randomized,webb2013kinodynamic}. In complex and unstructured road scenarios, realizing optimal kinodynamic motion planning still faces challenges, caused by the varying tire-ground friction coefficient, uncertain model parameters, measurement noise, etc.

	Model predictive control (MPC) algorithms generate trajectories by solving optimal control problems under collision constraints\cite{zeng2021safety,brito2019model,liu2018convex}. Since MPC methods rely on an accurate model, uncertain vehicle dynamics could lead to performance degradation in planning and control. Therefore, it is necessary to estimate the model uncertainties online \cite{hewing2019cautious,hewing2018cautious,kabzan2019learning}. On the other hand, online solving nonlinear or non-convex optimization problems may lack the reliability to obtain feasible solutions and may be computationally intensive.
	
	RL and ADP have received significant attention in recent years for solving optimal planning and control problems\cite{lian2014motion,liu2013policy}. Previous RL approaches in solving planning problems with uncertain dynamics employ the integration of Pontryagin's Maximum Principle to address model uncertainty and	kinodynamic constraints\cite{kamthe2018data,he2021integral}, incorporate model learning strategy and barrier function into the cost function to separately handle model uncertainty and collision constraints\cite{9756946,mahmud2021safety}, measure and select the optimal solution among the planning candidates to address the performance degradation caused by model uncertainty\cite{hoerger2020non,bejjani2018planning}. RL approaches necessitate the use of approximators e.g., actor-critic networks to approximate the optimal policy and value function for continuous control tasks, where the feature representation capability greatly impacts learning efficiency and performance\cite{xu2013kernel,8986835}. Due to the requirement for online fast adaptation ability to dynamic environments of motion planning algorithms, it is essential to enhance the feature representation capability for improving the learning efficiency. In our work, we adopt lightweight basis functions to construct features of actor and critic for  online efficient learning since deep neural networks are usually with the deep RL framework where the control policy is trained offline and deployed online. Still, expanding the sample size for better feature representations may increase the dimensions of the basis function and lead to higher computational complexity. As far as we know, no prior RL-based planning approaches have addressed automatic feature construction for efficient and near-optimal online policy learning.
	
	Motivated by the above challenge, we propose a sparse kernel-based RL algorithm for online efficient and near-optimal motion planning with uncertain dynamics. Specifically, we utilize a quantified kernel sparsification technique\cite{xu2007kernel,engel2004kernel} to enhance the feature representation ability. In a unified manner, this kernel sparsification technique is manifested in basis functions for both the actor and critic in RL and for Gaussian processes (GPs), thereby improving the online adaptation capability (please refer to the results in Sec.~\ref{AdaptionCapability}). Our approach differs from optimization-based methods\cite{hewing2019cautious,hewing2018cautious,kabzan2019learning}, as it allows for offline training and rapid online deployment and can generalize well to similar scenarios. In contrast, optimization-based methods may face issues with computational efficiency and reliability when solving nonlinear optimization problems online.
	\section{Related Work}
	We discuss GP-based  techniques and present literature review on model-based RL approaches (MBRLs) for systems with uncertain dynamics and state constraints.
	
	\textbf{GP-based model identification}.  Rasmussen \cite{rasmussen2003gaussian} presented the idea of formulating GP models as a Bayesian framework for regression problems of stochastic processes. Sparse GPs were previously proposed to reduce the computational load by exploiting the structure of matrices. By selecting inducing points corresponding to the local trajectory, sparse GPs were incorporated into MPC algorithms to reduce the conservativeness\cite{hewing2019cautious,hewing2018cautious,kabzan2019learning}. It is assumed that the reference signals typically exhibit minimal variation between adjacent time steps.  Motivated by these sparse GP-based methods, we adopt a quantification approach of approximate linear dependence (ALD) \cite{xu2007kernel,engel2004kernel} based on correlation measurement to obtain less correlated samples. Leveraging this strategy, we integrate it into the RL planning algorithm to address the trade-off between efficiency and feature representation capability in solving nonlinear optimization problems.

	\textbf{MBRL-based optimal control for systems with uncertain dynamics.}  In \cite{deisenroth2011pilco}, PILCO propagated model uncertainties for a feedback policy by using a gradient-based policy search in planning horizons. It is data-efficient for learning control policies from scratch. In \cite{xie2016model,nagabandi2018neural}, MPC algorithms with neural network (NN) based system dynamics were designed to improve the efficiency of MBRL. Related work on improving the sample efficiency can also be found in \cite{zhou2020deep,luis2023model}. These approaches focus on providing training samples for MBRL algorithms to enhance training efficiency. However, we adopt a sparse kernel-based technique to enable the online adaptation of the RL approach, thereby directly improving the planning and control performance. Under the MBRL framework, Bechtle et al. \cite{bechtle2020curious} proposed \emph{Curious iLQR}, which combines iLQR and Bayesian modeling of uncertain dynamics. Extensive simulations and real-world experiments on 7-DoF manipulators validate its superiority. In contrast, our approach is a sparse kernel-based RL method trained with batch data samples and the updated model. Therefore, it allows for efficient training and rapid online deployment and can generalize well to similar scenarios. In this manner, it does not need to solve nonlinear optimization problems at each time step.
	
	\textbf{MBRL-based motion planning for systems with uncertain dynamics.} Kamthe et al. \cite{kamthe2018data} utilized probabilistic MPC to generate trajectories, and employed Pontryagin's Maximum Principle to generate safe control policies. In \cite{he2021integral}, the unknown disturbances are considered in the kinodynamic motion problem and solved by Pontryagin's Maximum Principle. Zhang et al. \cite{9756946} proposed a receding-horizon RL algorithm with an NN-based uncertain dynamics model for motion planning of intelligent vehicles, while it is generally difficult to collect a sufficient number of samples. In \cite{mahmud2021safety}, a barrier transformation was incorporated into the RL algorithm for generating optimal control inputs online, but there may exist issues of slow convergence speed and difficulty in selecting appropriate parameters.  Performance measurement-based approaches can be found in \cite{hoerger2020non,bejjani2018planning}. In \cite{hoerger2020non}, it focuses on deciding when to linearize the system dynamics for mitigating the effect of uncertain dynamics. Bejjani et al. proposed using a sampling-based planner to generate planning results for learning the optimal value function, which is then further optimized through RL. In summary, it remains challenging for these studies to ensure the efficiency and safety of online training, while our model-based batch RL with online adaption capability provides an effective and practical solution for vehicle kinodynamic motion planning.
	
	\textbf{Contribution.} 
	1) We propose a sparse kernel-based RL motion planning algorithm, called GP-SKRL, which updates its policies in a batch training mode. In a unified manner, GP-SKRL utilizes a sparse kernel-based quantification technique to estimate the uncertain dynamics and train the RL policies. This enhances online adaption capability and the convergence of the algorithm is mathematically proved. 2) The proposed RL planning algorithm can maintain near-optimal performance under uncertain vehicle dynamics and collision constraints. Extensive simulation and experimental results show that GP-SKRL outperforms several advanced optimization-based methods. 3) Due to the consideration of dynamic characteristics during the learning process, GP-SKRL can directly compute control sequences for intelligent vehicles. In particular, real-world experiments on a Hongqi E-HS3 electric vehicle validate its efficiency and effectiveness.
	\section{Preliminaries}
	\subsection{Problem Formulation}
	Consider the discrete-time vehicle dynamics in \cite{hewing2019cautious}:
		\begin{equation}\label{true_model}
			{x}_{k+1} =f_{\text{nom}}\left( {x}_k ,{u}_k \right) + B_d\underbrace{\left( g(x_k,u_k)+w_k \right)}_{y_k},
		\end{equation}
		where ${x}_k\in\mathbb{R}^{n_{x}}$ is the system state and $u_k\in\mathbb{R}^{n_u}$ is the control input. The model consists of a known nominal part $f_{\text{nom}}$, an additive term $g$, which lies within the subspace spanned by $B_d$\cite{hewing2019cautious}. We assume that the process noise $w_k \sim \mathcal{N}\left(0, \Sigma^w\right)$ is independent and identically distributed (i.i.d.), with spatially uncorrelated properties, i.e., $\Sigma^w=\text{diag}\{\sigma_1^2, \ldots, \sigma_{n_x}^2\}$. We also assume that both $f_{\text{nom}}$ and $g$ are differentiable. Here $y_k$ is the uncertain dynamics, which are to be learned from data.
	
	Consider the reference trajectory as follows:
		\begin{equation}\label{ref_system}
			x_{k+1,r}=\mathcal{P}_r(x_{k,r},u_{k,r}),
		\end{equation}
		where $x_{k,r}$ is the reference state, $u_{k,r}$ is the reference control input, and $\mathcal{P}_r(\cdot,\cdot)$ is a smooth mapping function. With Eqs.~\eqref{true_model} and~\eqref{ref_system}, one can obtain the error model (deferred in Appendix.~\ref{nominal_dynamics}) as
		\vspace{-1mm}
		\begin{equation*}\label{error_model}
			\mathbf{x}_{k+1} =f\left( {\mathbf{x}}_k ,\mathbf{u}_k \right),
		\end{equation*}
		where $\mathbf{x}$ is the error state and $\mathbf{u}$ is the control input.
	
	Given an initial state $x_0\in\mathbb{R}^{n_x}$, generate the real-time optimal control $\mathbf{u}^*_k$ at the $k$-th time instant.
	The infinite-horizon optimal motion-planning problem can be formulated as
	\begin{equation}\label{optimal_valuefunction}
		\begin{array}{c}\min_{\mathbf{u}_k}\end{array}V(\mathbf{x}_k,\mathbf{u}_k), 
	\end{equation}
	and satisfy the following conditions: 1) Starts at $x_0$ and tracks the reference trajectory $\mathcal{P}_r$. 2) Avoids collisions with all obstacles $\mathcal{B}_1,\dots, \mathcal{B}_q\subseteq\mathcal{W}$, where $\mathcal{W}\in\mathbb{R}^2$ denotes the Euclidean workspace. 
	We define the value function as the cumulative discounted sum of infinite-horizon costs:
	\begin{equation*}\label{value_function}
		\begin{array}{c}
			V\left(\mathbf{x}_k, \mathbf{u}_k\right)=\sum_{i=k}^{\infty}\gamma^{i-k} L\left(\mathbf{x}_i, \mathbf{u}_i\right),
		\end{array}
	\end{equation*}
	where $0<\gamma\le1$ and we define $L(\mathbf{x}_i, \mathbf{u}_i)=\mathbf{x}_i^{\top}Q\mathbf{x}_i+\mathbf{u}_i^{\top}R\mathbf{u}_i$ as the stage cost function, where $Q\in\mathbb{R}^{n_{x}\times{n_x}}$ is positive semi-definite and $R\in\mathbb{R}^{n_u\times n_u}$ is positive definite.
	
	The above optimal value function can be obtained by applying the optimal policy $\mathbf{u}^*$ to the system, which is computed by setting $\partial V/\partial\mathbf{u}=0$, i.e.,
		\begin{equation}\label{optimal_action}
			\begin{array}{cc}
				\mathbf{u}^{*}_k = -\frac{1}{2}\gamma R^{-1}({\partial {\mathbf{x}}_{k+1}}/{\partial \mathbf{u}_k})^{\top}\lambda^*(\mathbf{x}_{k+1}),
			\end{array}
		\end{equation}
		where the optimal costate $\lambda^*_{k+1}=\partial V^*(\mathbf{x}_{k+1})/\partial \mathbf{x}_{k+1}$.
		With $L(\mathbf{x}_k, \mathbf{u}_k)$ and~\eqref{optimal_valuefunction}, the optimal costate is computed by
		\begin{equation}\label{optimal_lambda}
			\begin{aligned}
				\lambda^{*}_k=2Q\mathbf{x}_k+\gamma({\partial{\mathbf{x}_{k+1}}}/{\partial \mathbf{x}_k})^{\top} \lambda^{*}(\mathbf{x}_{k+1}).
			\end{aligned}
		\end{equation}
\subsection{Approximate Linear Dependence (ALD)}
	In terms of selecting dictionaries online, readers can refer to \cite{kabzan2019learning}. Then the dictionaries $\mathcal{D}_{\rm{GP}}$ and $\mathcal{D}_{\rm{RL}}$ undergo the sparsification process by utilizing ALD. We briefly outline the steps of ALD below (please refer to \cite{xu2007kernel,engel2004kernel} for details):
	
	Step 1: Given each sample $z_t$ from a collected sample set, the following optimization problem is formulated: 
	\begin{equation}\label{FormulatedALDApproach}
		\begin{array}{c}
			\delta_{t}=\min _{c}\left\|{\sum_{j=1}^{{t-1}}} c_{j} \phi\left(z_{j}\right)-\phi\left(z_{t}\right)\right\|^{2},
		\end{array}
	\end{equation}
	where $c=\left[c_{1}, c_{2}, \cdots, c_{{t-1}}\right]^{\top}\in\mathbb{R}^{(t-1)\times n_z}$ constitutes the current sparse dictionary $\mathcal{D}_{t-1}$. According to the Mercer kernel theorem\cite{phienthrakul2005evolutionary}, there exists a mapping function $\phi(\cdot)$ that maps the state from the collected sample set to the Hilbert space $\mathcal{H}$. Since the Gaussian kernel function $k(\cdot,\cdot)$ satisfies the Mercer kernel condition, {one can obtain the following equality:}
	\begin{equation*}
		k(z_i,z_j)=\left<\phi(z_i),\phi(z_j)\right>,
	\end{equation*}
	where $\left< \cdot ,\cdot \right> $ denotes the inner product in the Hilbert space.
	The distance $\delta_{t}$ between $z_t$ and $\mathcal{D}_{t-1}$ can be computed by~\eqref{FormulatedALDApproach}.

	Step 2: The sparse dictionary $\mathcal{D}_t$ is updated  by comparing $\delta_{t}$ with a preset threshold $\delta_{\rm{max}}$. If $\delta_{t}>\delta_{\rm{max}}$, $\mathcal{D}_t=\mathcal{D}_{t-1}\cup z_t$; Otherwise, we discard it; i.e., $\mathcal{D}_t=\mathcal{D}_{t-1}$.
	
		The final kernel dictionaries $\mathcal{D}_{\rm{SGP}}$ and $\mathcal{D}_{\rm{SRL}}$ are used for sparse GP regression and sparse kernel-based RL learning processes, respectively.
\subsection{Sparse GP Regression}
Next, we will review a sparse GP regression method called FITC \cite{snelson2005sparse}, which reduces computational complexity by selecting inducing samples and introduces a low-rank approximation of the covariance matrix, transforming the original GP model into an efficient one. It is briefly introduced in the following.
\subsubsection{The formulation of full GP Regression}
An independent training set is composed of state vectors, i.e., $\mathbf{z}=[z_1,z_2,\cdots,z_n]^{\top}\in\mathbb{R}^{n\times n_z}$ 
	and the corresponding output vectors $\mathbf{y}=[y_1,y_2,\cdots,y_n]^{\top}\in\mathbb{R}^{n\times n_{y}}$. In \cite{rasmussen2003gaussian}, the mean and variance functions of each output dimension $a\in\{1,\cdots,n_{y}\}$ at a test point $z=[x^\top,u^\top]^\top$ are computed by
	\begin{equation}\label{mean_covariance}
		\begin{aligned}
			&m^{a}_{d}=K^{a}_{z\mathbf{z}}(K^{a}_{\mathbf{zz}}+I\sigma^2_{a})^{-1}[\mathbf{y}]_a\\
			&\Sigma^{a}_{d}=K^{a}_{zz} -K^{a}_{z\mathbf{z}} \left( K^{a}_{\mathbf{zz}} +\sigma _{a}^{2}I \right) ^{-1}K^{a}_{\mathbf{z}z},\\
		\end{aligned}
	\end{equation}
	where $\sigma_{a}$ is the variance, $K_{\mathbf{zz}}^a=k^a(\mathbf{z},\mathbf{z})\in\mathbb{R}^{n_z\times n_z}$ is a Gram matrix containing variances of the training samples. Correspondingly, $K_{z\mathbf{z}}^a=(K_{\mathbf{z}z}^a)^{\top}=k^a(z,\mathbf{z})$ denotes the variance between a test sample and training samples, and $K_{zz}^a =k^a(z,z)$ represents the  covariance,
	$k^{a}(\cdot,\cdot)$ is the kernel function and defined as follows:
	\begin{equation}
		k^a\left(z_i, z_j\right)=\sigma_{f, a}^2 \exp (-1/2\left(z_i-z_j\right)^{\top} L_a^{-1}\left(z_i-z_j\right)),	
	\end{equation}
	where $\sigma_{f, a}^2$ is the signal variance and $L_a=\ell^2 I$. Here $\sigma_{f,a}$ and $\ell$ are hyperparameters of the covariance function.
\subsubsection{Sparse GP Regression}
Given a dictionary set $\{\mathbf{z}_{\rm{ind}},\mathbf{y}_{\rm{ind}}\}$ with $n_{\rm{ind}}$ samples from $\{\mathbf{z},\mathbf{y}\}$, the prior hyper-parameters can be optimized by maximizing the marginal log-likelihood of the observed samples. In \cite{snelson2005sparse}, the mean and variance functions of a full GP are approximated by using inducing targets $\mathbf{y}_{\rm{ind}}$, inputs $\mathbf{z}_{\rm{ind}}$, i.e.,

\begin{equation}\label{FITC}
	\begin{aligned}
		&\tilde{m}_a^d(z)=Q_{z \mathbf{z}}^a(Q_{\mathbf{z} \mathbf{z}}^a+\Lambda)^{-1}[\mathbf{y}]_{a}, \\
		&\tilde{\Sigma}_a^d(z)=K_{z z}^a-Q_{z \mathbf{z}}^a(Q_{\mathbf{z} \mathbf{z}}^a+\Lambda)^{-1} Q_{\mathbf{z} z}^a,
		\end{aligned}
\end{equation}
where $\Lambda=\operatorname{diag}\{K_{\mathbf{z}\mathbf{z}}^a-Q_{\mathbf{z}\mathbf{z}}^a+I \sigma_a^2\}$ and the notation $ Q_{\zeta \tilde{\zeta}}^a:=K_{\zeta \mathbf{z}_{\rm{ind}}}^a(K_{\mathbf{z}_{\rm{ind}} \mathbf{z}_{\rm{ind}}}^a)^{-1} K_{\mathbf{z}_{\rm{ind}} \tilde{\zeta}}^a$. Several matrices in~\eqref{FITC} do not depend on $z$ and can be precomputed, such that they only need to be updated when updating $\mathbf{z}_{\rm{ind}}$ or $\mathcal{D}$ itself.

A multivariate GP is established by combining $n_y$ outputs, i.e.,
	\begin{equation}
		d(z) \sim \mathcal{N}(m_d,\Sigma_{d}),
	\end{equation}
	where $m_d=[m_d^1,\cdots,m_d^{n_y}]^{\top}$, and $\Sigma_{d}=\text{diag}\{\Sigma_{d}^1,\cdots,\Sigma_{d}^{n_y}\}$.

\begin{figure}
	\centering\includegraphics[width=3.5in]{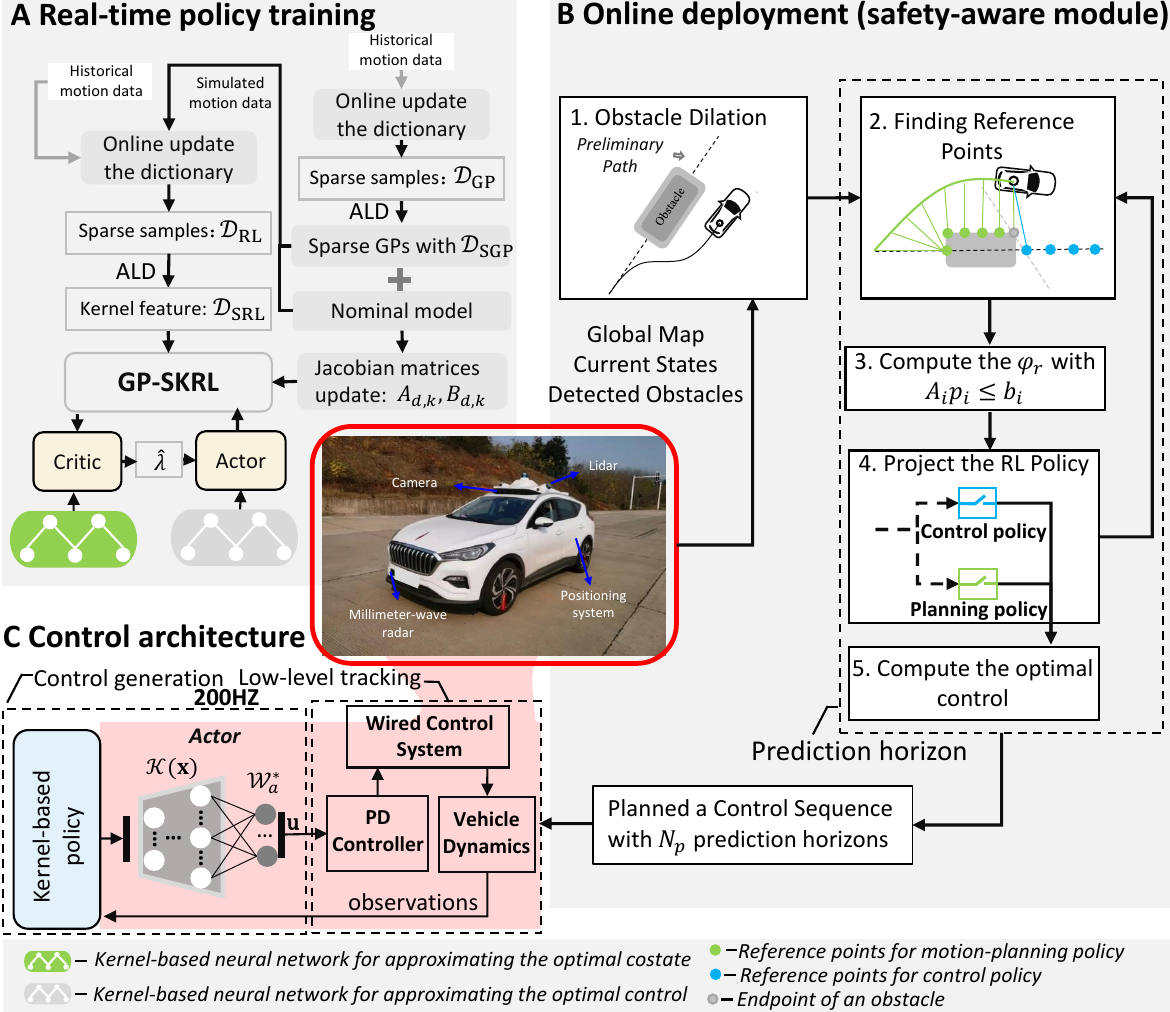}
	\caption{Illustration of {GP-SKRL}-based motion planning. }
	\label{fig_2}
\end{figure}

\section{Sparse Kernel-based Reinforcement Learning with Online Adaptation Capability}\label{Kernel-basedRL}
To obtain a near-optimal yet efficient performance, we propose a sparse kernel-based RL method. We employ a quantized sample sparse technique. Under a unifying manner, the sparsity is not only applied in RL but also in GP. This technique enables efficient and rapid adaptation from both model learning and policy learning perspectives. In this section, we will present the specific details of the sparse kernel-based RL algorithm. Firstly, we introduce the design of sparse GPs to facilitate the learning of model uncertainties. Building upon this foundation, we apply the sparse technique to design a kernel-based RL framework. Finally, we present a safety-aware module for deploying the RL policies to intelligent vehicles.
\subsection{Sparse GP Regression for Learning Model Uncertainties}\label{SGPRLU}
Nominal vehicle models often fail to perfectly represent the exact dynamics. Thus it is necessary to identify the differences to achieve better planning performance. 
To construct the ``input-output" form of a GP, Eq.~\eqref{true_model} is rewritten as
\begin{equation}\label{error_model}
	{d}(z_k) = B_d^{\dagger}({x}_{k+1} -f_{\text{nom}}\left({x}_k ,{u}_k \right)),
\end{equation}
where $B_d^{\dagger}$ is the Moore–Penrose pseudoinverse. Historic vehicle motion data is collected to capture the exact dynamics.

With the optimized parameters, GP models are utilized to compensate for uncertainties. Therefore, we define the learned model of~\eqref{true_model} as
\begin{equation}\label{Dynamics_for_planning}
	x_{k+1}=f_{\text{nom}}({x}_k, {u}_k)+{d}(z_k)+\epsilon,
\end{equation}
where $\epsilon\in\mathbb{R}^{n_x}$ is the estimation error. 
	\begin{remark}
		Given a training data set with $n$ samples, the computational complexity of the full GP is $\mathcal{O}(n^3)$, while it is $\mathcal{O}(nn^2_{\rm{ind}})$ for FITC. If we obtain a sparse dictionary $\mathcal{D}_{\rm{SGP}}$ with $n_{\rm{sp}} \ll n$ samples, the complexity is $\mathcal{O}(n_{\rm{sp}}n^2_{\rm{sp},\rm{ind}})$, where $n_{\rm{sp},\rm{ind}}$ is the inducing samples obtained from $\mathcal{D}_{\rm{SGP}}$.
\end{remark}
The Jacobian matrices of~\eqref{Dynamics_for_planning} are
	\begin{equation}\label{JocabianMatrices}
		\begin{array}{l}
			A_{d,k}=A_{\text{nom},k}+\frac{\partial {d}(z_k)}{\partial {{x}_k}}, B_{d,k}=B_{\text{nom},k}+\frac{\partial {d}(z_k)}{\partial {{u}_k}},
		\end{array}
	\end{equation}
	where $A_{d,k}\in\mathbb{R}^{n_x\times n_x}$, $B_{d,k}\in\mathbb{R}^{n_x\times n_u}$, $A_{\text{nom},k}=I+{T}_s\partial f_{\text{nom}}^0/\partial x$ and $B_{\text{nom},k}={T}_s\partial f_{\text{nom}}^0/\partial u$, ${T}_s$ is the sampling interval and $f_{\text{nom}}^0$ is the nominal dynamics. 
	Finally, the following model is used for training:
	\begin{equation}\label{identification_form_dynmaic}
		\mathbf{x}_{k+1}=A_{d,k}\mathbf{x}_k + B_{d,k}\mathbf{u}_k,
	\end{equation}
	where the error state $\mathbf{x}$ is defined by $x-x_r$; Namely, $\mathbf{x} = [e_{v_x},e_{v_y},e_{\varphi},e_{\omega},e_X,e_Y]^{\top}$, and $\mathbf{u}=[a_x, \delta_{f}]^{\top}$.
	\subsection{Sparse Kernel-based RL with State Constraints}\label{4.2}
	
\subsubsection{Cost function reformulation with barrier function}
For the vehicle dynamics in Sec.~\ref{nominal_dynamics}, we define a projection function $\psi:\mathbb{R}^6\rightarrow\mathbb{R}^2$  by $(v_x, v_y, \varphi, \omega, X, Y) \mapsto (e_X,e_Y)$, mapping the current vehicle states to the position errors.
The safety constraint is incorporated into the barrier function, i.e.,
\begin{equation}\label{barrierfunction}
	\mathcal{B}({x}_k) = \exp \left( -\lVert \psi({x}_k)  \rVert \right).
\end{equation}
Thus the cost function $L$ is re-defined as follows:
\begin{equation}\label{motion_planning_object}
	\begin{aligned}
		L(\mathbf{x}_k, \mathbf{u}_k)	= \underbrace{\mathbf{x}_k^{\top}Q\mathbf{x}_k}_{\emph{state term}} + \underbrace{\mathbf{u}_k^{\top}R\mathbf{u}_k}_{\emph{control term}}+\underbrace{\mu\mathcal{B}({x}_k)}_{\emph{barrier term}},
	\end{aligned}
\end{equation}
where the penalty coefficient $\mu$ is a non-negative constant, $Q=\mathrm{diag}\{Q_1, 0, Q_3, 0, Q_5, Q_6\}\in\mathbb{R}^{6\times6}$ is positive semi-definite, and $R=\mathrm{diag}\{R_1, R_2\}\in\mathbb{R}^{2\times2}$ is positive definite.
\begin{figure}[ht]
	\centering
	\centering\includegraphics[width=2.5in]{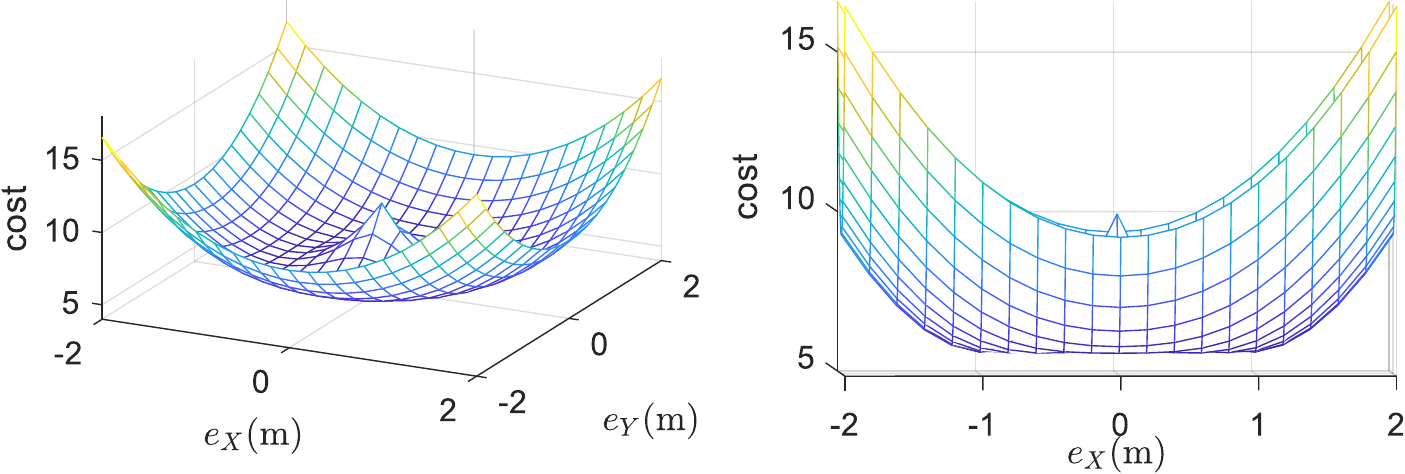}
	\caption{An illustration of the values of $L$ with $e_X$ and $e_Y$.}
	\label{fig_barrier}
\end{figure}
\begin{remark}
	Fig.~\ref{fig_barrier} illustrates the designed cost function $L$, which is a trade-off between training accuracy and safety. The \mt{state term} implies that a smaller value of $\lVert\psi({x_k})\rVert$ is preferred. As obstacles on the global path would lead to safety issues, the \mt{barrier term} comes into effect, driving the vehicle away from the \mt{desired path}. However, as $\lVert\psi({x_k})\rVert$ increases, the cost of the \mt{state term} becomes larger, which lowers tracking accuracy. Finally, one can obtain a policy considering both tracking accuracy and safety embedded in $L$.
\end{remark}
\subsubsection{Sparse kernel-based structure for approximating the optimal control and value function}
ALD can also be utilized to sparsify samples and hence acquire representative ones. Sparse kernel-based functions are adopted in actor-critic networks to construct a basis function vector for approximating $\lambda^*$ and $\mathbf{u}^*$ (specific derivation can be found in \cite{liu2013policy}). To this end, their estimations are	expressed using the sparse kernel function, i.e.,
\begin{subequations}
	\begin{equation}\label{hat_u} 
		\hat{\mathbf{u}}_k=\mathcal{W}_a^{\top}\mathcal{K}(\mathbf{x}_k),
	\end{equation}
	\begin{equation}\label{hat_lambda}
		\hat{\lambda}_k=\mathcal{W}_c^{\top}\mathcal{K}(\mathbf{x}_k),
	\end{equation}
	where $\mathcal{W}_a\in\mathbb{R}^{n_{\mathcal{K}}\times 2}$ and $\mathcal{W}_c\in\mathbb{R}^{n_{\mathcal{K}}\times 6}$ are the weights of the actor and critic networks, respectively, and $n_{\mathcal{K}}$ denotes the dimension of $\mathcal{D}_{\rm{SRL}}$. The basis function of state $\mathbf{x}_k$ is
		\begin{equation}\label{MultiKernelFeature}
			\mathcal{K}(\mathbf{x}_k)=\left[k(\mathbf{x}_k, c_{1}), \ldots, k(\mathbf{x}_k, c_{n_{\mathcal{K}}})\right]^{\top},
		\end{equation}
		where $c_1,\cdots, c_{n_{\mathcal{K}}}\in\mathbb{R}^6$ are the elements in $\mathcal{D}_{\rm{SRL}}$.
\end{subequations}
\subsubsection{Iterative batch-mode training framework using sparse kernel feature}\label{SKRL}
Here, we utilize a batch-mode RL learning strategy to train the actor-critic networks with a sample set $\mathcal{D}_{\rm{RL}}=\{\mathbf{x}_k\}_{k=1}^M$.
In terms of the actor network, the objective is to learn the optimal policy $\mathbf{u}^*$. At each iteration, the mean square cost to be minimized can be expressed as follows:
\begin{subequations}
	\begin{equation}\label{Ea}
		\begin{array}{cc}
			E_{a}=\lVert\hat{\mathbf{U}}-\mathbf{U}\rVert^{2}+\rho_a\lVert\mathcal{W}_{a}\rVert^2,
		\end{array}
	\end{equation}
	where $\hat{\mathbf{U}}=[\hat{\mathbf{u}}_1,\cdots,\hat{\mathbf{u}}_M]$ and $\mathbf{U}=[\mathbf{u}_1,\cdots,\mathbf{u}_M]$, $M$ is the number of samples, $\rho_a$ is a small positive number, and $\rho_a\lVert\mathcal{W}_{a}\rVert^2$ is an $L_2$ regularization term to limit the excessive increase of the weights during the update process. Here, $\mathbf{u}_{k}$ is the target control policy, which can be obtained by letting $\partial V/\partial \mathbf{u}=0$ and defined by
	\begin{equation}\label{target_action}
		\begin{array}{cc}
			\mathbf{u}_k=-{1}/{2}\gamma R^{-1}B_{d,k}^{\top}\hat{\lambda}_{k+1}.
		\end{array}
	\end{equation}
\end{subequations}
In the critic network, the target is to learn the optimal value function $\mathbf{\Lambda}$ of the planning task, which yields the following cost function to be minimized:
\begin{subequations}
	\begin{equation}\label{Ec}
		\begin{array}{cc}
			E_{c}=\Vert\hat{\mathbf{\Lambda}}-\mathbf{\Lambda}\Vert^{2}+\rho_c\lVert\mathcal{W}_{c}\rVert^2,
		\end{array}
	\end{equation}
	where $\hat{\mathbf{\Lambda}}=[\hat{\lambda}_1,\cdots,\hat{\lambda}_M]$ and $\mathbf{\Lambda}=[{\lambda}_1,\cdots,{\lambda}_M]$, $\rho_c\lVert\mathcal{W}_{c}\rVert^2$ is an $L_2$ regularization term,  $\rho_{c}$ is a small positive real number. Here $\lambda_k$ the target value function defined by
	\begin{equation}\label{target_lambda}
		\lambda_k=2Q\mathbf{x}_k+\mu\frac{\partial{\mathcal{B}({x}_k)}}{\partial{\mathbf{x}_k}}+\gamma A_{d,k}^{\top}\hat\lambda_{k+1}
	\end{equation}
\end{subequations}
Letting $\partial E_{a}/\partial \mathcal{W}_{a}$ and $\partial E_{c}/\partial \mathcal{W}_{c}$ be $0$, the update rules of actor-critic networks can be obtained.
Hence, at the $i$-th iteration, the weight update of the critic network is
\begin{equation}\label{WeightupdateWc}
	\mathcal{W}_{c} = (\mathbf{K}\mathbf{K}^{\top} +\rho _cI)^{-1}\mathbf{K}{\mathbf{\Lambda}}^\top,
\end{equation}
where $\mathbf{K}=[\mathcal{K}(\mathbf{x}_1),\cdots,\mathcal{K}(\mathbf{x}_M)]$.
In the same way, the weight update of the actor network is
	\begin{equation}\label{WeightupdateWa}
		\mathcal{W}_{a} = (\mathbf{K}\mathbf{K}^{\top} +\rho _aI)^{-1}\mathbf{K}{\mathbf{U}}^\top.\\
\end{equation} 

{GP-SKRL} is trained for a combinatorial near-optimal planning policy based on the collected samples. After the weights of the actor-critic networks converge at the $i$-th iteration, i.e., $\lVert \mathcal{W}_{a,i}- \mathcal{W}_{a,i-1} \rVert^{2}\le\sigma _a$ and $\lVert \mathcal{W} _{c,i} - \mathcal{W} _{c,i-1} \rVert^{2}\le\sigma_c$, where $\sigma_a$ and $\sigma_c$ are small positive real numbers, the learned policies can be deployed to real-time motion planning tasks. 

\begin{algorithm}[t]
	\footnotesize
	\caption{{GP-SKRL} algorithm}
	\label{algorithm_KACMP}
	\LinesNumbered
	\SetKwComment{Comment}{//}{}
	\KwIn{Hyper-parameters $\gamma, \rho_a, \rho_c, \sigma_a, \sigma_c$,  randomly initialized $\mathcal{W}_a, \mathcal{W}_c$, pre-collected datasets $\mathcal{D}_{\rm{GP}}, \mathcal{D}_{\rm{RL}}$.} 
	
	\KwOut{The converged policy $\pi$.}
	
	
	
	Update the dictionaries $\mathcal{D}_{\rm{GP}}$ and $\mathcal{D}_{\rm{RL}}$ online.
	
	Obtain the sparse dictionaries $\mathcal{D}_{\rm{SGP}}$ and $\mathcal{D}_{\rm{SRL}}$ using ALD.
	
	Train GP models using $\mathcal{D}_{\rm{GP}}$ and $\mathcal{D}_{\rm{SGP}}$ (inducing points).

	\For{$\varrho \in \{0, 1\}(control~and~planning)$}{
		$\varrho=0: \mu=0$; \tcp{\textit{Control policy}}
		
		$\varrho=1: \mu\gets \text{a positive real number}$. \tcp{\textit{Planning policy}}
		
		
		\While{policy has not converged
		}{
			
				\For{$k=1 \cdots M\ (sample~number~of~\mathcal{D}_{\rm{RL}})$}{
					Update $A_{d,k},B_{d,k}$ using GP models yields Eq.~\eqref{identification_form_dynmaic}.
					
					Obtain $\hat{\lambda}_k$ and $\hat{\mathbf{u}}_k$ with~\eqref{hat_lambda} and~\eqref{hat_u}.  
					
					Construct kernel feature $\mathcal{K}(\mathbf{x}_k)$ with~\eqref{MultiKernelFeature}.
					
				}
				
				\For{$k=1 \cdots M\ (sample~number~of~\mathcal{D}_{\rm{RL}})$}{
					
					Obtain ${\mathbf{x}}_{k+1}$ and ${\lambda}_{k+1}$ with~\eqref{identification_form_dynmaic} and~\eqref{hat_lambda}..
					

					Compute $\mathbf{u}_k$ and $\lambda_k$ with~\eqref{target_action} and~\eqref{target_lambda}.
				}
				Update actor-critic weights with Eqs.~\eqref{WeightupdateWc} and~\eqref{WeightupdateWa}.
				
			}
			$\pi(\varrho)=\mathcal{W}_{a}^*$.
		}
		
		
		\While{the vehicle has not reached the destination
		}{
			
			Compute the control input $\hat{\mathbf{u}}$ with~\eqref{hat_u} and perform steps $1$-$20$.
			
			\If{the planning and control policies converge}{
				Update the current policy $\pi$.}
			
		}
	\end{algorithm}
	
	\begin{remark}\label{computaitonal_complexity}
		The computational complexity of {GP-SKRL} in training approximates to $\mathcal{O}(M{n_{\mathcal{K}}^2})$. When the trained policy is applied to real-time tasks, the computational complexity is approximately $\mathcal{O}(2N_u{n_{\mathcal{K}}})$, where $N_u$ is the dimension of $u$. 
	\end{remark}

	\begin{remark}
		{Our approach is built upon kernel-based RL methods\cite{xu2013kernel,8986835}. On the one hand, we extend the previous kernel-based RL works to the online adaptive learning scenarios under the uncertain dynamics constraint. On the other hand, we incorporate cost-shaping into sparse kernel-based RL to realize near-optimal motion planning of intelligent vehicles.}
	\end{remark}
	
	 {In the following, we present the convergence analysis of the GP-SKRL algorithm.}
	\begin{thm}\label{thm1}
		(Convergence of {GP-SKRL}). As $i\rightarrow\infty$, sequences $\hat{\mathbf{\Lambda}}^{[i]}$ and $\hat{\mathbf{U}}^{[i]}$ will converge to $\mathbf{\Lambda}^*$ and $\mathbf{U}^*$, respectively, i.e., as $i\rightarrow\infty$, $\hat{\mathbf{\Lambda}}^{[i]}\rightarrow\mathbf{\Lambda}^*$ and $\hat{\mathbf{U}}^{[i]}\rightarrow \mathbf{U}^*$.
	\end{thm}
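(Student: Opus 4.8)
The plan is to recast one sweep of the inner policy-iteration loop of Algorithm~\ref{algorithm_KACMP} as a single fixed-point map, to prove that map is a contraction (so Banach's theorem gives geometric convergence), and then to identify its fixed point with the optimal pair. Throughout that inner loop the sparse dictionaries, the GP model, the batch $\{\mathbf{x}_k\}_{k=1}^{M}$ together with its successors, the regressors $A_{d,k},B_{d,k}$ in \eqref{identification_form_dynmaic}, and hence $\mathbf{K}=[\mathcal{K}(\mathbf{x}_1),\dots,\mathcal{K}(\mathbf{x}_M)]$ are all frozen; only the finite-dimensional weights $(\mathcal{W}_a,\mathcal{W}_c)$ evolve. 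Substituting the targets \eqref{target_action}, \eqref{target_lambda} and the read-outs \eqref{hat_u}, \eqref{hat_lambda} into the least-squares solutions \eqref{WeightupdateWa}, \eqref{WeightupdateWc} exhibits the sweep as $\hat{\mathbf{\Lambda}}^{[i+1]}=\Pi_c\,\mathcal{T}_\lambda(\hat{\mathbf{\Lambda}}^{[i]})$ and $\hat{\mathbf{U}}^{[i+1]}=\Pi_a\,\mathcal{T}_u(\hat{\mathbf{\Lambda}}^{[i]})$, where $\mathcal{T}_\lambda,\mathcal{T}_u$ are the costate Bellman backup and the stationarity map built from \eqref{target_lambda}, \eqref{target_action} with the cost \eqref{motion_planning_object} --- whose offsets $2Q\mathbf{x}_k+\mu\,\partial\mathcal{B}(x_k)/\partial\mathbf{x}_k$ are well defined and bounded since $Q\succeq0$ and the barrier in \eqref{barrierfunction} is smooth with bounded gradient on the compact error-state set --- and $\Pi_c(\cdot)=(\cdot)\,\mathbf{K}^{\top}(\mathbf{K}\mathbf{K}^{\top}+\rho_c I)^{-1}\mathbf{K}$, $\Pi_a(\cdot)=(\cdot)\,\mathbf{K}^{\top}(\mathbf{K}\mathbf{K}^{\top}+\rho_a I)^{-1}\mathbf{K}$ are the ridge least-squares projections onto the kernel feature span. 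Since $\mathcal{T}_\lambda$ does not feed back through $\hat{\mathbf{U}}$ once the batch transitions are fixed, the critic recursion is self-contained, and convergence of $\hat{\mathbf{U}}^{[i]}$ will follow from that of $\hat{\mathbf{\Lambda}}^{[i]}$ by continuity of $\Pi_a\mathcal{T}_u$.

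The core step is to show $\Pi_c\mathcal{T}_\lambda$ is a contraction, which I would split into two facts. First, the exact costate/value iteration --- the fixed-point iteration defined by \eqref{optimal_lambda} together with the greedy action \eqref{optimal_action} --- is Lipschitz with modulus $\gamma\bar\kappa$, where $\bar\kappa:=\sup_k\|A_{d,k}\|<\infty$ is finite by differentiability of $f_{\text{nom}}$ and $d$ in \eqref{Dynamics_for_planning} and compactness of the batch; under the admissibility/stabilisability condition standard in ADP --- there is a norm in which the closed-loop backup contracts, which holds in particular whenever $\gamma\bar\kappa<1$ --- this iteration converges to $\mathbf{\Lambda}^*$, the classical convergence property of exact policy iteration \cite{liu2013policy}. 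Second, the projections are non-expansive in the empirical norm: the nonzero eigenvalues of $\mathbf{K}^{\top}(\mathbf{K}\mathbf{K}^{\top}+\rho_c I)^{-1}\mathbf{K}$ are $\sigma_j^2/(\sigma_j^2+\rho_c)\in(0,1)$, with $\sigma_j$ the singular values of $\mathbf{K}$, so $\|\Pi_c\|\le1$ (strictly $<1$ on the feature span since $\rho_c>0$), and likewise $\|\Pi_a\|\le1$. Composing, $\Pi_c\mathcal{T}_\lambda$ is a contraction with modulus $\gamma\bar\kappa\|\Pi_c\|<1$, so Banach's fixed-point theorem yields a unique $\hat{\mathbf{\Lambda}}^{\infty}$ with $\hat{\mathbf{\Lambda}}^{[i]}\to\hat{\mathbf{\Lambda}}^{\infty}$ at rate $(\gamma\bar\kappa\|\Pi_c\|)^{i}$, and $\hat{\mathbf{U}}^{[i]}\to\hat{\mathbf{U}}^{\infty}:=\Pi_a\mathcal{T}_u(\hat{\mathbf{\Lambda}}^{\infty})$.

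It then remains to identify $(\hat{\mathbf{U}}^{\infty},\hat{\mathbf{\Lambda}}^{\infty})$ with $(\mathbf{U}^*,\mathbf{\Lambda}^*)$. At the fixed point the pair solves the kernel-projected versions of \eqref{optimal_lambda} and \eqref{optimal_action} on the batch. Invoking the universal-approximation property of the Gaussian (Mercer) kernel together with the ALD construction --- shrinking $\delta_{\max}$ makes $\mathcal{D}_{\rm{SRL}}$ rich enough to represent $\mathbf{\Lambda}^*$ and $\mathbf{u}^*$, while $\rho_a,\rho_c\to0^+$ removes the ridge bias so that $\Pi_a,\Pi_c$ act as the identity on the relevant subspace --- the projected equations reduce exactly to \eqref{optimal_lambda}--\eqref{optimal_action}, whose unique batch solution is $(\mathbf{U}^*,\mathbf{\Lambda}^*)$. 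Hence $\hat{\mathbf{\Lambda}}^{[i]}\to\mathbf{\Lambda}^*$ and $\hat{\mathbf{U}}^{[i]}\to\mathbf{U}^*$, as claimed.

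I expect the main obstacle to be the contraction step, for two reasons. First, composing the costate Bellman backup with the least-squares projection is a contraction only in a \emph{compatible} norm; securing this forces the Jacobians along the batch to satisfy a closed-loop stability/detectability-type condition (made explicit here as $\gamma\bar\kappa<1$ in that norm), which is delicate precisely in the boundary case $\gamma=1$ permitted in the value function $V$, and is the crux of any rigorous version of the argument. Second, if the successor states $\mathbf{x}_{k+1}$ are re-rolled from the current policy inside the inner loop rather than held fixed, $\mathcal{T}_\lambda$ stops being affine in $\hat{\mathbf{\Lambda}}$ --- the Gaussian features $\mathcal{K}(\mathbf{x}_{k+1})$ then depend nonlinearly on $\hat{\mathbf{U}}$ --- so the actor--critic pair must be treated jointly and $\mathbf{x}_{k+1}\mapsto\mathcal{K}(\mathbf{x}_{k+1})$ controlled by a Lipschitz bound. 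A milder point is that without the density/vanishing-ridge argument one obtains only convergence to an $\mathcal{O}(\delta_{\max}+\rho_a+\rho_c+\|\epsilon\|)$-neighbourhood of $(\mathbf{U}^*,\mathbf{\Lambda}^*)$, with $\epsilon$ the GP estimation error in \eqref{Dynamics_for_planning}; the exact statement of Theorem~\ref{thm1} requires arguing this residual bias vanishes.
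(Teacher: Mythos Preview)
Your proposal is correct in spirit and considerably more careful than the paper's own proof, but it takes a genuinely different route. The paper's argument (Appendix~\ref{AppendixProof}) is essentially three lines: it writes out $\lambda_k^{[i]}$ from \eqref{target_lambda}, notes that the barrier gradient $\mu\,\partial\mathcal{B}(x_k)/\partial\mathbf{x}_k$ is bounded, invokes a lemma from \cite{al2008discrete} to assert $V_\infty\to V^*$ and hence $\lambda_\infty=\lambda^*$, and then substitutes into \eqref{optimal_action} to get $\mathbf{u}^{[i]}\to\mathbf{u}^*$. There is no contraction argument, no explicit treatment of the ridge projections $\Pi_a,\Pi_c$, no discussion of when $\gamma\bar\kappa<1$ or what happens at $\gamma=1$, and no mention of the kernel approximation error or the GP residual $\epsilon$; the passage from $\lambda^{[i]}\to\lambda^*$ to $\hat{\mathbf{\Lambda}}^{[i]}\to\mathbf{\Lambda}^*$ is asserted without addressing the least-squares fitting step at all.

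What your approach buys is an honest accounting of the hypotheses actually needed: the closed-loop stability condition for contraction, the non-expansiveness of the ridge projections, and the density/vanishing-bias argument required to identify the fixed point with $(\mathbf{U}^*,\mathbf{\Lambda}^*)$ rather than merely a neighbourhood of it. You also correctly flag that without $\delta_{\max},\rho_a,\rho_c\to0$ one only gets near-optimality, which is in fact what the paper's title (``near-optimal'') suggests but what Theorem~\ref{thm1} as stated does not. The paper's route is shorter because it outsources the hard step to \cite{al2008discrete} and silently treats the kernel read-outs $\hat{\mathbf{\Lambda}},\hat{\mathbf{U}}$ as if they were the exact targets $\mathbf{\Lambda},\mathbf{U}$; your route is longer but makes explicit what the cited lemma is doing and where the approximation gaps live.
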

	\begin{proof}
		{See Appendix~\ref{AppendixProof}.}
	\end{proof}

	\subsection{Safety-Aware Module for Deploying RL Policies}\label{Method_safety_aware_module}
The safety-aware scheme (right half of Fig.~\ref{fig_2}) consists of five modules. The first module, obstacle dilation, dilates obstacles to ensure that the physical constraints of the vehicle are not violated. The second module, reference point determination, determines the current reference point by finding the nearest point on the \emph{desired path}, which is computed by the following strategy. When the vehicle is within the potential obstacle zone, the \mt{desired path} is adjusted to align with the contour of the obstacle. The control policy is employed to repeatedly calculate $N_p$-step real-time trajectories with the global path. If the trajectories collide with obstacles, the current \mt{desired path} is maintained; Otherwise, the \mt{desired path} is switched to the global path. Note that the global path can be obtained by connecting the starting and end points or through path planning techniques. For clarity, we provide an illustrative diagram depicting the variations of reference points during the execution of motion planning tasks in Fig.~\ref{fig_SwitchingScheme}.

\begin{figure}[ht]
	\centering
	\centering\includegraphics[width=2.5in]{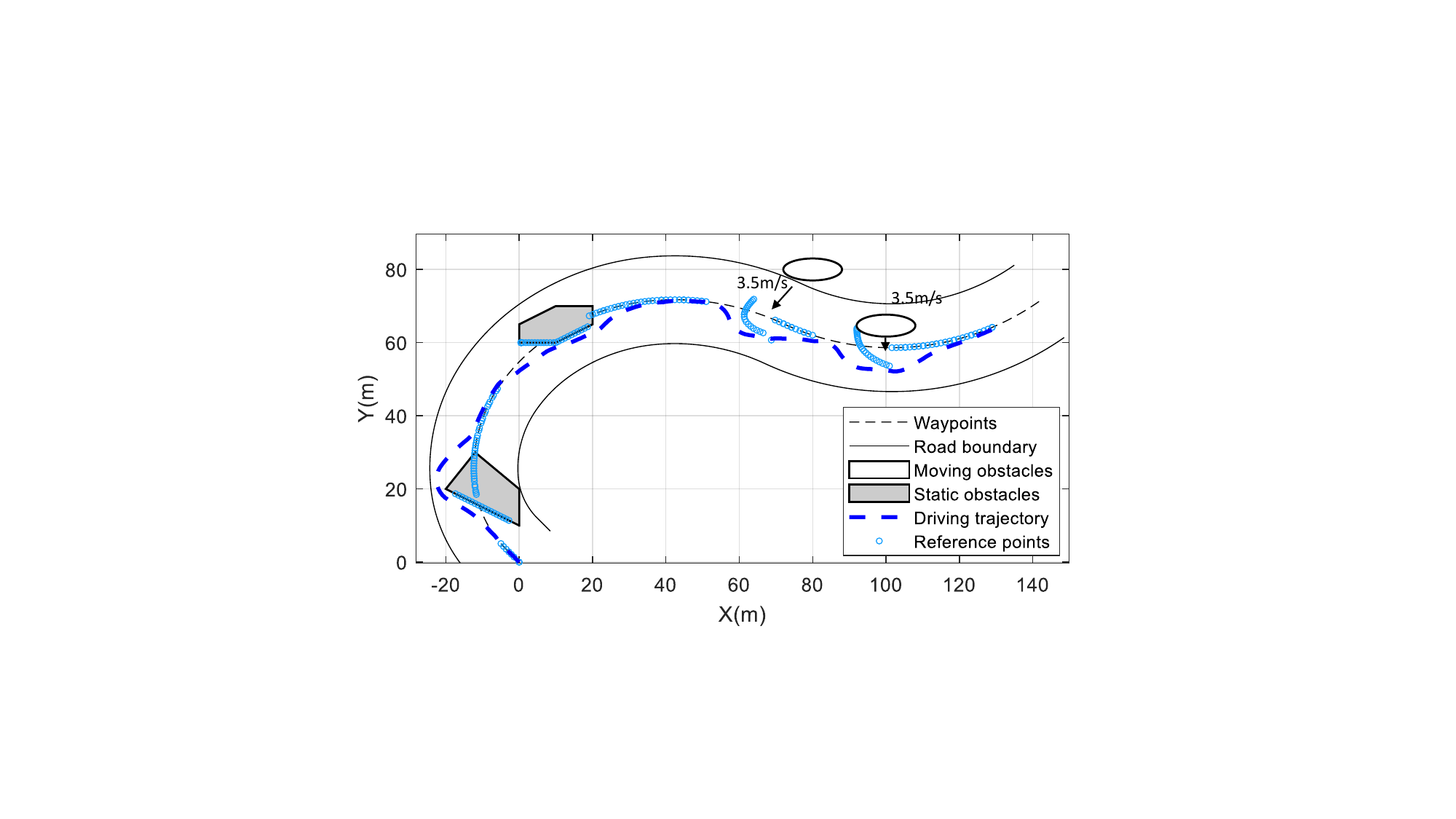}
	\caption{An illustration of selecting reference points in a planning task.}
	\label{fig_SwitchingScheme}
\end{figure}

When dealing with polygonal obstacles, the third module, reference heading angle calculation, employs a consistent strategy described in \cite{liu2018convex}. The established constraint at the $i$-th time instant is $A_i^{\top} p_i \leq b_i$, where $A_i=[a_{i,1}, a_{i, 2}]^\top\in\mathbb{R}^2$ and $-a_{i, 1}/a_{i, 2}$ is the slope of a straight line, $b_i\in\mathbb{R}$ is the intercept, and $p_i=(X_i,Y_i)^{\top}\in\mathbb{R}^2$ is the position of the vehicle. The reference heading angle is obtained using the inverse tangent function: $\varphi_{r,i} = \text{atan}(-a_{i, 1}/a_{i, 2})$. In the fourth module, RL policy projection, a switching scheme is implemented. The planning policy, denoted as $\pi(1)$, is applied when the \mt{desired path} is the obstacle's contour. Conversely, when the \mt{desired path} is the global path, the policy is adjusted to $\pi(0)$. The final module, optimal control computation, allows for the computation of optimal control using Eq.~\eqref{hat_u}.
	 
\section{Performance Evaluation and Comparisons}\label{Comparision}
The performance of {GP-SKRL} was tested through comparisons with kinodynamic RRT$^\star$ (Kd-RRT$^\star$)\cite{webb2013kinodynamic}, model predictive control with control barrier function (MPC-CBF)\cite{zeng2021safety}, local model predictive contouring control (LMPCC)\cite{brito2019model} and convex feasible set algorithm (CFS)\cite{liu2018convex} under specific metrics. Their characteristics and differences are shown in TABLE~\ref{tab:1}. Then we compared the tracking-control performance of {GP-SKRL} with and without model learning, abbreviated as {GP-SKRL} (w/ ML) and {GP-SKRL} (w/o ML), to validate the capability of learning uncertainties.
Furthermore, real-world experiments on a Hongqi E-HS3 electric car were performed.
\begin{table}[htbp]
	\centering\setlength\tabcolsep{1.5pt}
	\renewcommand\arraystretch{1.0}
	\tiny
	\scriptsize
	\caption{Comparison Between Methods Involved in Simulation or Real-world Experiments.}
	\label{tab:1} 
	\begin{threeparttable} 
		\begin{tabular}{cccccc}
			\toprule
			{{Methods}}  & {Kd-RRT$^\star$} & {MPC-CBF}& {LMPCC}&{CFS}& {GP-SKRL} \\ \midrule
			{Vehicle dynamics}&{\checkmark}&{\checkmark}&{\checkmark}&{\checkmark}&{\checkmark}\\
			{Online optimization}&$\times$&{\checkmark}&{\checkmark}&{\checkmark}&{\checkmark}\\
			{Offline training}&{\checkmark}&{$\times$}&{$\times$}&{$\times$}&\mc{\checkmark}\\
			{Dynamic obstacles}&$\times$&{\checkmark}&{\checkmark}&$\times$&{\checkmark}\\
			{Model online learning}&{$\times$}&{$\times$}&{$\times$}&{$\times$}&{\checkmark}\\\bottomrule
		\end{tabular}
	\end{threeparttable}
\end{table}
	
	\subsection{Performance Comparisons with Other Approaches}
	The settings and hyperparameters of the comparative methods discussed in this section are described in Appendix~\ref{Appendix_Settings}.
	\subsubsection{Evaluations of the computational results}
To better evaluate the {GP-SKRL} approach, we compare it with other kinodynamic motion planning algorithms under several specific metrics, which are Aver. S.T. (average solution time of each time step), $J_{\text{Lat}}$ (cost of the lateral error), $J_{\text{Lon}}$ (cost of the longitudinal error), $J_{\text{Heading}}$ (cost of the heading error), $J_{\text{Con}}$ (control cost), and $J_{\text{Safe}}$ (safety cost).
In this regard, we use a weighted average over all costs to evaluate the performance, which is expressed as follows:
	\begin{equation}\label{CostFunction}
	\footnotesize
	\begin{aligned}
		J = \begin{array}{c}\frac{1}{N}\sum_{k=1}^N (\mathcal{Q}_1J_{k,\text{Lon}}+\mathcal{Q}_2J_{k,\text{Lat}}+\mathcal{Q}_3J_{k,\text{Heading}}+J_{k,\text{Con}}),\end{array}
	\end{aligned}
\end{equation}
where $J_{\text{Lon}}=\lVert e_x \rVert^2, J_{\text{Lat}}=\lVert e_y \rVert^2, J_{\text{Heading}}=\lVert e_{\varphi} \rVert^2$, the control penalty matrix is  $\mathcal{R}=\text{diag}\{\mathcal{R}_1,\mathcal{R}_2\}\in\mathbb{R}^{2\times 2}$, where $\mathcal{R}_1, \mathcal{R}_2>0$, the control cost is $J_{\text{Con}}=\lVert \mathbf{u} \rVert_{\mathcal{R}}^2$, and $N$ denotes the number of the overall time steps. Another quantitative metric of overall length can be computed by 
\begin{equation*}
	\begin{array}{c}
		\text{Length} = \sum_{k=1}^{N-1} \lVert (X_{k+1},Y_{k+1})-(X_{k},Y_{k})\rVert.
	\end{array}
\end{equation*}

Additionally, when a planning algorithm is deployed on the vehicle, the completion time from the starting point to the goal point is denoted as $CT$.

\subsubsection{Analyses of simulation results in the first scenario}
We set the starting point as $(X,Y)=(5,58)$ and the end point at $(X,Y)=(238,50)$. A reference trajectory is obtained by connecting the two points with a constant reference heading angle of $-0.0343$ \rm{rad}. In this scenario, we set the maximum desired speed $V_{\text{max}}$ as $10$ \rm{m/s} for the testing approaches, and vehicles are required to avoid static polygonal obstacles. 
\begin{figure}[!htb]
	\centering\includegraphics[width=3.0in]{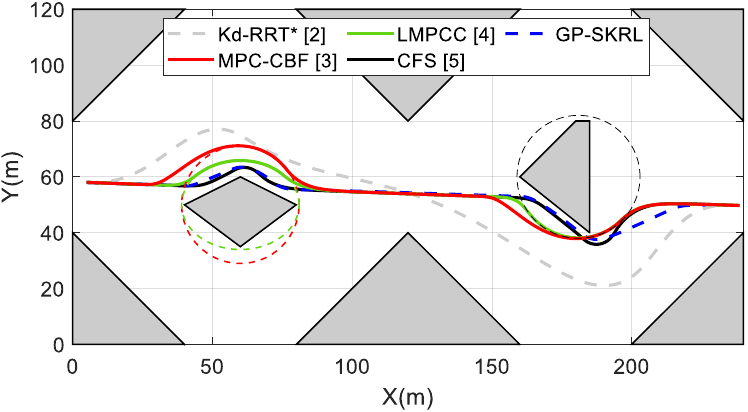}
	\caption{Obstacle avoidance in an unstructured road: the grey regions are static obstacles that are physically inaccessible. } 
\label{fig:CompareWithOthers}
\end{figure}

\begin{table}[!htb]\centering
\setlength\tabcolsep{1.5pt}
\renewcommand\arraystretch{1.0}
\scriptsize
\caption{Performance Evaluations of Fig.~\ref{fig:CompareWithOthers}}\label{table:MetricsComparisonI}
\begin{threeparttable}
	\begin{tabular}{ccccccccc}
		\toprule 
		{Quantitative Metrics}&{{GP-SKRL}}&{MPC-CBF}&{LMPCC}&{CFS}&Kd-RRT$^\star$\tnote{1}\\ \midrule 
		$J$&$\mathbf{44.91}$&{$105.90$}&{$60.86$}&$47.80$&$463.59$\\
		\text{Length}&$\mathbf{243.3}$ \rm{m}&{$249.3$} \rm{m}&{$246.3$} \rm{m}&$246.0$ \rm{m}&$261.6$ \rm{m}\\
		$CT$&$\mathbf{24.3}$ \rm{s}&{$25.8$} \rm{s}&{$24.8$} \rm{s}&$24.9$ \rm{s}&$25.62$ \rm{s}\\
		\text{Aver. S.T.}&$\mathbf{0.005}$ \rm{ms}&$100$ \rm{ms}&$80$ \rm{ms}&$60$ \rm{ms}&$174$ \rm{ms}\\ \bottomrule
	\end{tabular}
	\begin{tablenotes}
		\footnotesize
		\item[1] : Kd-RRT$^\star$ approach converges at the $236$-th iteration. 
	\end{tablenotes}
\end{threeparttable}
\end{table}

In Fig.~\ref{fig:CompareWithOthers}, the red dashed circle and green dashed ellipse are used to enclose the first static obstacle to establish the MPC-CBF and LMPCC constraints, respectively. The black dash circle is used to enclose the second static obstacle to establish the MPC-CBF and LMPCC constraints. {According to the testing results in our implementation, decreasing $\mathcal{Q}_1$ and $\mathcal{Q}_3$ while increasing $\mathcal{Q}_2$ enables better performance. Finally, state penalty values in~\eqref{CostFunction} are set as $\mathcal{Q}_1=\mathcal{Q}_2=2$ and $\mathcal{Q}_3=5$. The control penalty matrix $\mathcal{R}$ is $\text{diag}\{3,3\}$.} The comparison results are listed in TABLE~\ref{table:MetricsComparisonI}, which show that {GP-SKRL} makes the vehicle move along the obstacle boundary and achieves a better performance than LMPCC, MPC-CBF, and CFS in terms of  computation time, cost, and trajectory length. Under the constraint of $V_{\text{max}}$, {GP-SKRL} completes the task with the shortest time among the approaches.  It should be noted that {GP-SKRL} is more computationally efficient among all compared approaches due to the scheme of batch RL. 

\begin{figure}[!htb]
\centering
\centering\includegraphics[width=3.0in]{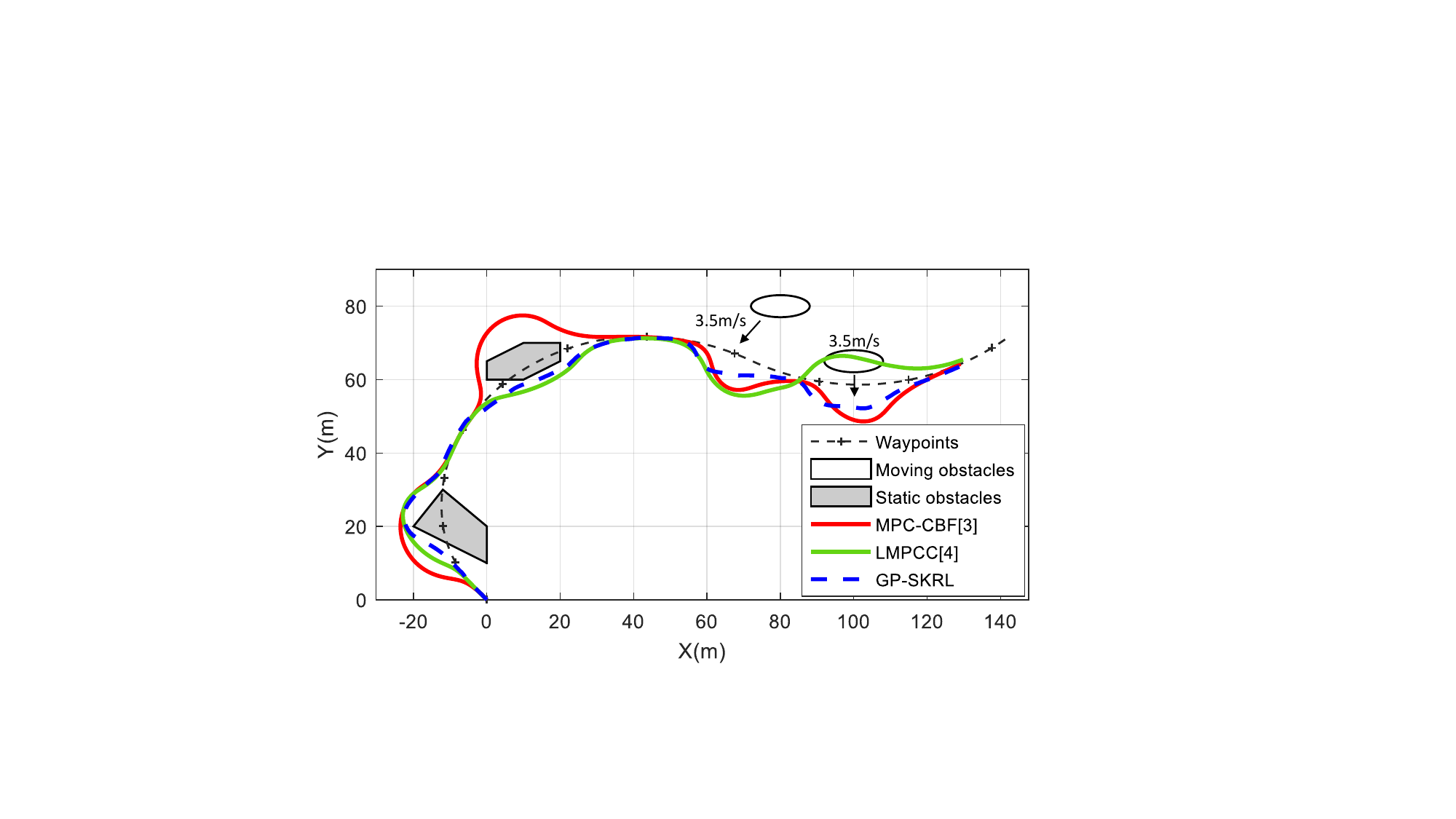}
\caption{Planning results in Scenario II with static and moving obstacles. The starting point coordinate is set as $(0,0)$. The moving obstacles (dark ellipses) are at their initial positions and the black arrows are their moving directions.}
\label{Fig_ScenarioII}
\end{figure}
\begin{table}[!htb]\centering
	\setlength\tabcolsep{8pt}
	\scriptsize
	\caption{Performance Evaluations of Fig.~\ref{Fig_ScenarioII}}\label{table:MetricsComparisonII}
	\begin{threeparttable}
		\begin{tabular}{ccccccccc}
			\toprule 
			Quantitative Metrics&{{GP-SKRL}}&{MPC-CBF}&{LMPCC}&CFS\\ \midrule
			$J$&$\mathbf{33.64}$&$94.28$&$54.62$&-\tnote{1}\\ 
			\text{Length}&$\mathbf{215.12}$ \rm{m}&$241.53$ \rm{m}&$217.41$ \rm{m}&-\\
			$CT$&$\mathbf{21.50}$ \rm{s}&$24.50$ \rm{s}&$22.0$ \rm{s}&-\\
			Aver. S.T. &$\mathbf{0.005}$ \rm{ms}&$90$ \rm{ms}&$100$ \rm{ms}&-\\
			\bottomrule 
		\end{tabular}
		\begin{tablenotes}
			\footnotesize
			\item[1] CFS with the kinodynamic constraint fails to avoid moving obstacles. 
		\end{tablenotes}
	\end{threeparttable}
\end{table}
\subsubsection{Analyses of simulation results in the second scenario}
To implement the comparison methods, we enclose all obstacles separately with circles and ellipses, and the black line with cross marks is the reference trajectory (see Fig.~\ref{Fig_ScenarioII}). In this scenario, the maximum desired speed is set as $10$ \rm{m/s} for the testing approaches. As shown in Fig.~\ref{Fig_ScenarioII}, MPC-CBF, LMPCC, and {GP-SKRL} successfully enable the vehicle to avoid obstacles and arrive at the destination. In the following, we analyze the specific metrics of them in completing this task.

As shown in TABLE~\ref{table:MetricsComparisonII}, {GP-SKRL} is more advantageous than MPC-CBF and LMPCC regarding computation time, average cost and route length. Regarding cost, MPC-CBF and LMPCC use circles or ellipses to enclose all obstacles, which results in too many extra safe areas being included. Likewise, {GP-SKRL} can obtain a shorter route among the comparison approaches. And it takes a shorter computational time. 
\subsubsection{Validation of the model learning scheme}
To improve planning and control performance, {GP-SKRL} trains GP models to learn model uncertainties. The parameters of the nominal model are set as $m=20,000$ \rm{kg} and $I_z=20,000$ \rm{kg}$\cdot$\rm{m}, while $m=2257$ \rm{kg} and $I_z=3524.9$ \rm{kg}$\cdot$\rm{m} are set for the exact model.  Hyperparameters are set as $\sigma_{f, a}=5$ and $\ell=2$. {Due to the structure of the nominal model, parameter uncertainties are assumed to only affect $v_y$ and $\omega$ of the system; that is, $g(x, u)=g\left(v_x, v_y, \omega, a_x, \delta_f\right): \mathbb{R}^5 \rightarrow \mathbb{R}^2$. The i.i.d. process noise $w_k\in\mathbb{R}^6$ is added to the vehicle dynamics, where $\Sigma^w=0.001\cdot\rm{diag}\{1/3,\cdots,1/3\}$. The proposed sparse GP is employed to identify model uncertainties on all six states.}

\begin{figure}[ht]
	\centering\includegraphics[width=2.0in]{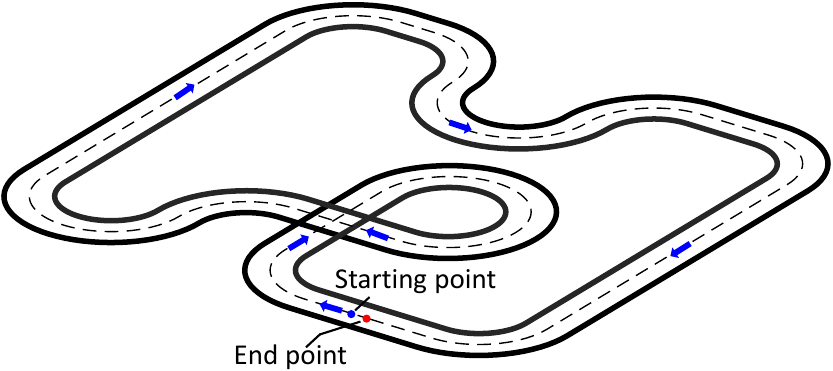}
	\caption{Racing road with a desired speed of ${10}$ \rm{m/s} and maximum length of $508$ \rm{m}. The black dot curve is the reference path and the black solid lines are the road boundaries. We use blue arrows to represent the driving directions.}
	\label{fig:CompModelLearning}
\end{figure}

After collecting the training samples, the posterior hyper-parameters were inferred from the marginal log-likelihood optimization. Fig.~\ref{fig:CompModelLearning} depicts the racing road for  testing the tracking-control performance of {GP-SKRL} (w/ ML) and {GP-SKRL} (w/o ML). We counted the lateral stage errors and the average lateral errors of their two driving trajectories to present the performance improvement clearly. In Fig.~\ref{fig:Error_ComparisonTrackingWithAndWithoutModelLearning}, using the ML strategy reduces the lateral tracking control error, demonstrating that the ML strategy improves the planning and control performance of {GP-SKRL} under model uncertainties.

\begin{figure}[!ht]
\centering\includegraphics[width=3.0in]{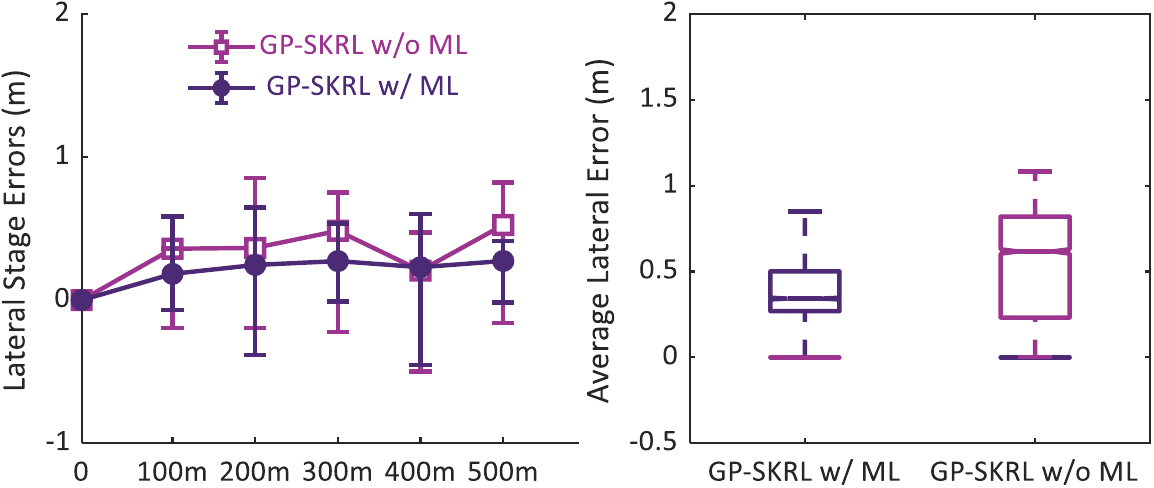}
\caption{Tracking error comparison in the racetrack road of Fig.~\ref{fig:CompModelLearning} between {GP-SKRL} (w/o ML) and {GP-SKRL} (w/ ML). }
\label{fig:Error_ComparisonTrackingWithAndWithoutModelLearning}
\end{figure}

\begin{table}[htbp]
\centering\setlength\tabcolsep{4.5pt}
\scriptsize
\caption{Comparison Between Two Sparse GP Methods.}
\label{tab:SpareseGP} 
\begin{threeparttable} 
	\begin{tabular}{ccccccc}
		\toprule
		\multicolumn{1}{c}{\multirow{2}{*}{{\textbf{{NDPs}}}}}	&\multicolumn{2}{c}{{\textbf{{training time \rm{(s)}}}}} & \multicolumn{2}{c}{{\textbf{APE in $v_y$} ($10^{-3}$ \rm{m/s})}} & \multicolumn{2}{c}{{\textbf{APE in $\omega$} ($10^{-3}$\rm{rad/s})}} \\ 
		\multicolumn{1}{c}{}                     &   \multicolumn{1}{c}{{ALD-GP}} & \multicolumn{1}{c}{{FITC-GP}}                          &             \multicolumn{1}{c}{{ALD-GP}} & \multicolumn{1}{c}{{FITC-GP}}             &       \multicolumn{1}{c}{{ALD-GP}} & \multicolumn{1}{c}{{FITC-GP}}                   \\ \midrule
		{$9000$}                                      & \multicolumn{1}{c}{{$0.10$}}  & {$43.14$}                       & \multicolumn{1}{c}{{$1.36$}}  & {$0.29$}                       & \multicolumn{1}{c}{{$1.05$}}  & {$0.30$}                       \\ 
		{$6000$}                                       & \multicolumn{1}{c}{{$0.05$}}  & {$14.65$}                       & \multicolumn{1}{c}{{$1.37$}}  & {$0.31$}                       & \multicolumn{1}{c}{{$1.05$}}  & {$0.31$}                       \\ 
		{$3000$}                                       & \multicolumn{1}{c}{{$0.04$}}  & {\;\;$2.71$}                       & \multicolumn{1}{c}{{$1.38$}}  & \;{$0.36$ }                      & \multicolumn{1}{c}{{$1.05$}}  & {$0.35$}                       \\ 
		{$1000$}                                       & \multicolumn{1}{c}{{$0.03$}}  & {\;\;$0.31$}                       & \multicolumn{1}{c}{{$2.86$}}  & {$1.76$}                       & \multicolumn{1}{c}{{$2.13$}}  & {$1.56$}                       \\ \bottomrule
	\end{tabular}
\end{threeparttable}
\end{table}

{To validate the effectiveness of ALD-GP (sparse GP with ALD), we conducted the following test. For a fixed number of data points (NDPs), training and testing were performed using ALD-GP and FITC-GP (sparse GP with FITC) methods separately. The resulting training time and the one-step average prediction error (APE) were recorded and are presented in TABLE IV. In summary, the ALD-GP method shows slightly higher average errors in $v_y$ and $\omega$, but the training time is significantly lower compared with FITC-GP when the value of NDPs is high. Therefore, ALD-GP significantly improves computational efficiency, sacrificing only little accuracy.}
\subsubsection{Validation of the Online Adaption Capability}\label{AdaptionCapability}
To highlight the online adaption capability of the algorithm, abbreviated as GP-SKRL w/ OA, we deployed the RL policy trained with the nominal model (abbreviated as GP-SKRL w/o OA) in a tracking control task. The exact vehicle parameters were set as different values in three stages: $0$-$170$ \rm{m} ($m=2257$ \rm{kg}, $I_z=3524.9$ \rm{kg}$\cdot$ \rm{m}), $170$-$340$ \rm{m} ($m=1957$ \rm{kg}, $I_z=3224.9$ \rm{kg}$\cdot$\rm{m}), and $340$-$508$ \rm{m} ($m=1657$ \rm{kg}, $I_z=2924.9$ \rm{kg}$\cdot$\rm{m}). The first half stages, i.e., $0$-$85$ m, $170$-$255$ m, and $340$-$425$ m, adopted the same policy as GP-SKRL w/o OA to collect training data. Online policy updates were performed at positions $85$ m, $255$ m, and $425$ m, with an average time of $0.99$ seconds. From the simulation results in Fig.~\ref{fig:adaption_com_nominal_results}, the tracking control error is reduced in the second half stages, $85$-$170$ m, $255$-$340$ m, and $425$-$508$ m. The simulation results validate the online adaption capability of our algorithm.
\begin{figure}[!ht]
\centering\includegraphics[width=3.0in]{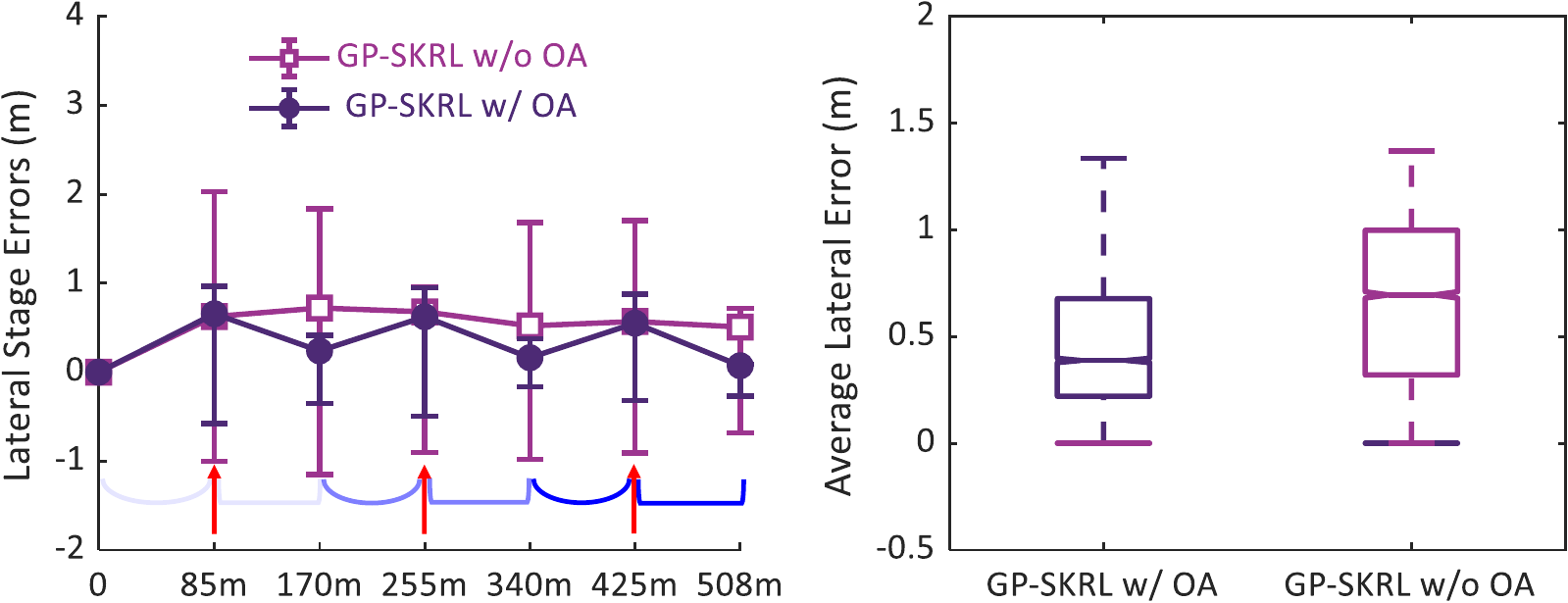}
\caption{Tracking error comparison in the racetrack road of Fig.~\ref{fig:CompModelLearning} between {GP-SKRL} (w/o OA) and {GP-SKRL} (w/ OA). The red arrows indicate the positions where RL policies were updated. }
\label{fig:adaption_com_nominal_results}
\end{figure}

\subsection{Real-World Experiments}
The experimental platform, shown in the central image of Fig.~\ref{fig_2},  is a real Hongqi E-HS3 electric car equipped with a positioning system and perception sensors, including millimeter-wave radar, LiDAR, and cameras.
The planning program was run on a laptop in the Windows operating system with an Intel i7-11800H CPU @2.30GHz, and the computed control sequence was transferred to the low-level control module using the Robot Operating System (ROS)\cite{quigley2009ros}.

\subsubsection{Case 1: Experimental comparisons with optimization-based approaches}
We further compared {GP-SKRL} with several optimization-based approaches in an unstructured environment. With an initial velocity of $0$ \rm{m/s} and a desired velocity of $2$ \rm{m/s}, the ego vehicle has to avoid the moving obstacle to remain safe. For comparison, the prediction horizons of LMPCC and MPC-CBF were both set to $20$. To improve the computational efficiency, a successive linearization strategy was adopted in our implementation, and the OSQP\cite{stellato2020osqp} solver was used. Consistent with the error definitions of the lateral and longitudinal directions in \cite{brito2019model}, the cost function was set to $\mathcal{Q}_1e_x^2+\mathcal{Q}_2e_y^2+\mathcal{Q}_3e_{\varphi}^2+\mathcal{R}_1^2\delta_{f}+\mathcal{R}_2^2a_x+P_{\text{colli}}$, where the state penalty matrix $[\mathcal{Q}_1,\mathcal{Q}_2,\mathcal{Q}_3]$ was set to $[10,5,500]$, the control matrix $\mathcal{R}$ was set to $\text{diag}\{50,360\}$. If a collision with obstacles occurs, then $P_{\text{colli}}=100$; otherwise, $P_{\text{colli}}=0$. 

Two static obstacles are placed on the reference trajectory while a moving obstacle is crossing the road. 
\begin{figure}[!ht]
\centering
\centering\includegraphics[width=3.5in]{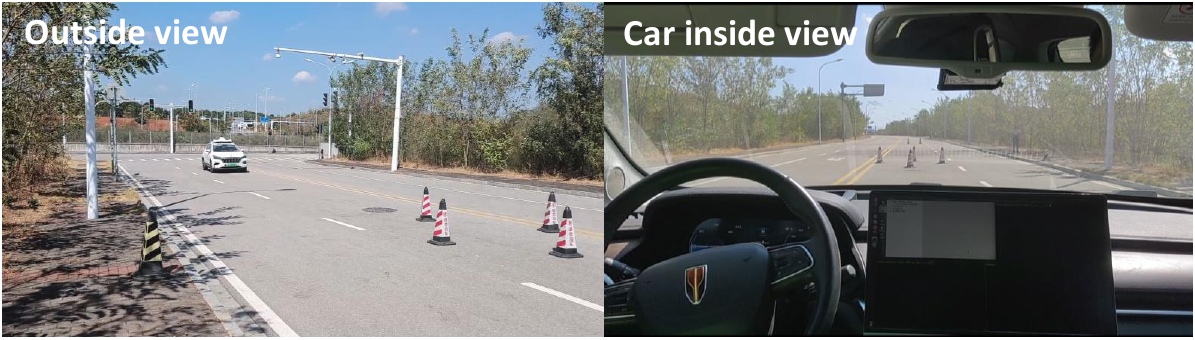}
\caption{Views from inside and outside the car during the experiment. For simplicity, we ignore the presence of lanes and it is assumed safe for the vehicle to drive into the opposite lane in this experiment.}
\label{fig_ExperimentPicture}
\end{figure}

The vehicle travels from the starting point $(0,0)$ to the end point $(150,0)$. It must avoid two static obstacles and a moving pedestrian to maintain safe driving. As seen in Fig.~\ref{ExpComparisionsResults}, the final driving trajectories using the LMPCC, {GP-SKRL} and MPC-CBF planners are depicted with the green, blue, and red gradient-colour lines, respectively. LMPCC can successfully avoid the first two static obstacles, but it does not take any actions due to the failure to find a feasible solution. MPC-CBF successfully avoids the two grey static obstacles. As the pedestrian crosses the road, the ego vehicle must engage in emergency obstacle-avoidance behaviour, but the vehicle fails to avoid collision in time, causing a solution failure of the optimization problem, which is mainly attributed to the heavy computational load. Notably, {GP-SKRL} successfully completes the tasks in real-time. 
\begin{figure}[!htb]
\centering
\centering\includegraphics[width=3.5in]{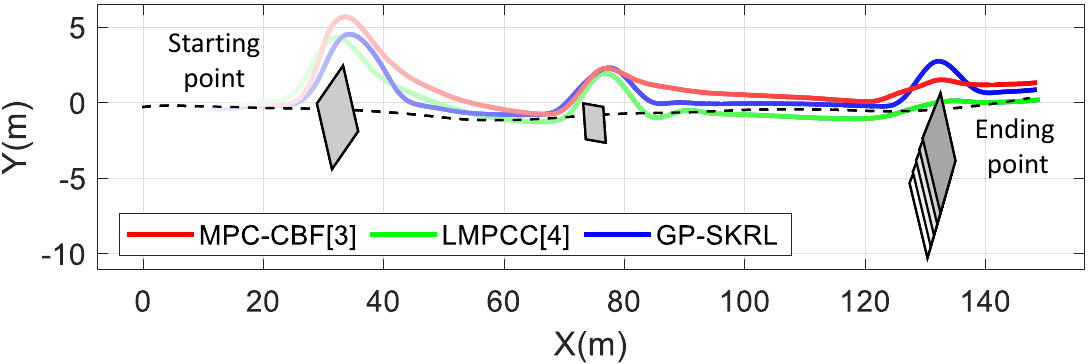}
\caption{Experimental results of three approaches for obstacle avoidance tasks. The polygons with black contours are obstacles blocking the road. The moving pedestrian is represented by
	multiple grey polygons. All approaches are required to have a desired speed of $2$ \rm{m/s}.}
\label{ExpComparisionsResults}
\end{figure}
\begin{table}[!htb]\centering
\setlength\tabcolsep{12pt}
\scriptsize
\caption{Performance Evaluations of Fig.~\ref{ExpComparisionsResults}}\label{table:ExpQuantativeResults}
\begin{threeparttable}
	\begin{tabular}{ccccccccc}
		\toprule
		{Quantitative Metrics}&{{GP-SKRL}}&{MPC-CBF$^*$}&{LMPCC$^*$}\\ \midrule
		$J$&$\mathbf{62.18}$&$185.77$&$161.58$\\
		{Aver. S.T.}&$\mathbf{0.005}$ \rm{ms}&$90$ \rm{ms}&$100$ \rm{ms}\\ \bottomrule
	\end{tabular}
	\begin{tablenotes}
		\footnotesize
		\item[*] MPC-CBF and LMPCC with the kinodynamic constraint fail to solve the planning problem in time. 
	\end{tablenotes}
\end{threeparttable}
\end{table}
\subsubsection{Case 2: Moving obstacle avoidance}
In this case, a moving polygonal obstacle moves along the reference trajectory. With an initial velocity of $0$ \rm{m/s} and a desired velocity of $5$ \rm{m/s}, the ego vehicle has to avoid the moving obstacle to remain safe. As seen in Fig.~\ref{Exp_ExperimentalResultsPlot1}, the ego vehicle (blue boxes) starts from the coordinate origin and tracks the reference trajectory. Before crashing into a blocking moving obstacle (grey polygons), the ego vehicle proactively avoids it with a lateral displacement and finally tracks the reference trajectory.

\begin{figure}[!ht]
\centering
\centering\includegraphics[width=3.5in]{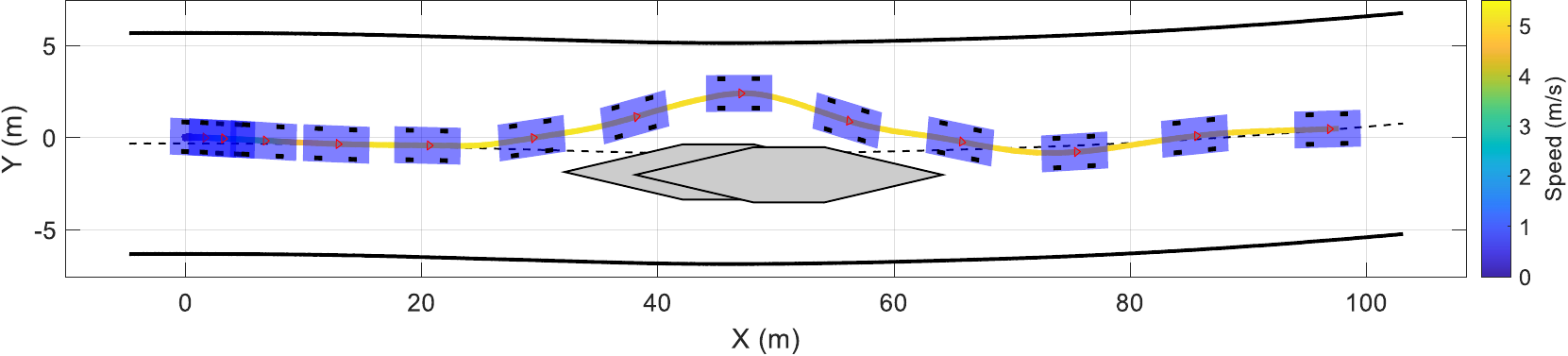}
\caption{Moving obstacle avoidance where the black dashed line is the reference trajectory, and the thick black line is the road boundary.}
\label{Exp_ExperimentalResultsPlot1}
\end{figure}
\subsection{Analysis and Discussion}
{As mentioned in Sec.~\ref{SKRL}, the computational complexity of GP-SKRL is $\mathcal{O}(M{n_{\mathcal{K}}^2})$. The complexity increases exponentially as the values of $M$ and $n_\mathcal{K}$ increase. For methods without using the quantized sparse technique, the complexity is $\mathcal{O}(M^3)$, which is much higher. In our implementation, GP-SKRL converges after about $60$ training iterations with 30000 samples.  And each iteration only takes around $0.015$\rm{s}.}

From the comparison results, several factors contribute to the superiority of {GP-SKRL}: 1) The proposed sparse GP can generate representative samples using a quantized sparse technique, thus outperforming the FITC method in terms of computational efficiency; 2) We propose an efficient sparse kernel-based RL method for learning optimal motion planning policies, enabling online adaption with near-optimal performance; 3) By incorporating a safety-aware module, GP-SKRL is able to assess the feasibility of generated local trajectories, and therefore, it enables a vehicle to return to the reference trajectory as soon as possible. GP-SKRL ultimately achieves performance surpassing that of several advanced methods.

\section{Conclusion}\label{Conclusion}

This paper proposed a learning-based near-optimal motion planning approach with online adaption capability for intelligent vehicles with uncertain dynamics. The sparse kernel-based technique is manifested not only in the basis functions of RL but also in GPs. Therefore, the RL policies can be efficiently learned to guide the vehicle to maintain a safe distance from obstacles. We demonstrated the effectiveness of GP-SKRL in completing motion planning tasks via simulation and real-world experimental results. In particular, we demonstrated that the performance in a racing scenario can be significantly improved by using the sparse GP technique.
\appendix
\subsection{Vehicle Dynamics}\label{nominal_dynamics}
The vehicle dynamics are described using a ``bicycle'' model in \cite{rajamani2011vehicle}, and the nominal dynamics are expressed as
\begin{footnotesize}
	\begin{equation*}\label{vehicle_dynamics}
		\left[ \begin{array}{l}
			\dot{v_x}\\
			\dot{v_y}\\
			\dot{\varphi}\\
			\dot{\omega}\\
			\dot{X}\\
			\dot{Y}\\
		\end{array} \right] =\underbrace{\left[ \begin{array}{c}
				v_y\omega +a_x\\
				2C_{af}( \frac{\delta_f}{m} -\frac{v_y+l_f\omega}{mv_x}) + 2C_{ar}\frac{l_r\omega -v_y}{mv_x}  -v_x\omega\\
				\omega\\
				\frac{2}{I_z}\left[ l_fC_{af}( \delta _f-\frac{v_y+l_f\omega}{v_x}) -l_rC_{ar}\frac{l_r\omega -v_y}{v_x} \right]\\
				v_x\cos \varphi -v_y\sin \varphi\\
				v_x\sin \varphi +v_y\cos \varphi\\
			\end{array} \right]}_{f_{\text{nom}}^0(x,u)}, 
	\end{equation*}
\end{footnotesize}where $x=(v_x, v_y, \varphi, \omega, X, Y)\in \mathbb{R}^6$ is the state vector, $v_x, v_y$ denote the longitudinal and lateral velocities, respectively, $\varphi$ is the yaw angle, $\omega$ denotes the yaw rate, $X, Y$ are the global horizontal and vertical coordinates of the vehicle, respectively, $l_f, l_r$ are the distances from the center of gravity (CoG) to the front and rear wheels, respectively, $C_{af}, C_{ar}$ represent the cornering stiffnesses of the front and rear wheels, respectively, and $I_z$ denotes the yaw moment of inertia. In this model, acceleration $a_x$ and steering angle $\delta_{f}$ are two variables of the control vector $u$, i.e., $u=[a_x, \delta_{f}]^{\top}$. The reference state is denoted by ${x_r}=(v_x^r, v_y^r, \varphi_r, \omega_r, X_r, Y_r)$. Fig.~\ref{fig_1} displays the relationship between the driving vehicle and the projection point $x_r=(v_x^r, v_y^r, \varphi_r, \omega_r, X_r, Y_r)$. To construct an error model for facilitating subsequent algorithm design, we subtract the current state from the desired state, i.e, $\mathbf{x}=x-x_r$ and the system control from desired control, i.e., $\mathbf{u} = u-u_r = [a_x,\delta_{f}]^{\top}$, where $u_r = [0, 0]$.
\begin{figure}[htb]
	\centering\includegraphics[width=3.0in]{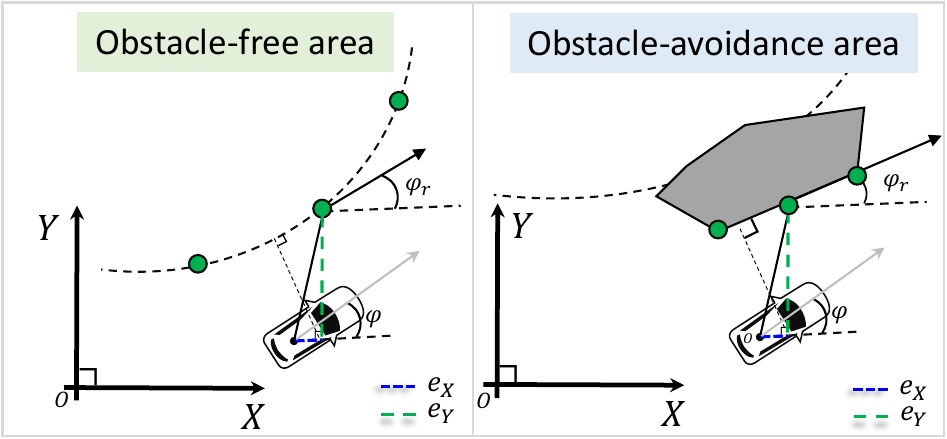}
	\caption{The relationship between the vehicle and the reference points $x_r$. The green circles are the reference points, while the distance errors in the $X$ and $Y$ directions are blue and green dashed lines, respectively.	}
	\label{fig_1}
\end{figure}
\subsection{Parameter Settings of the Training Process}
The state boundaries in sampling and training are set as follows: $e_{v_x}\in[-6,6]$ \rm{m/s}, $e_{v_y}\in[-6,6]$ \rm{m/s},  $e_{\varphi}\in[-\frac{\pi}{3},\frac{\pi}{3}]$ \rm{rad},  $e_{\omega}\in[-6,6]$ \rm{rad/s}, $e_X \in[-3,3]$ \rm{m}, $e_Y\in[-3,3]$ \rm{m}.
The control boundaries are set as follows: $a_x\in \left[ -1,1\right]$ \rm{m/s}$^2$ and $\delta _f\in \left[ -{\pi}/{6} , {\pi}/{6}\right]$ \rm{rad}.
The real vehicle parameters for the simulation are set as follows: $M=2257$ \rm{kg}, $l_f=1.33$ \rm{m}, $l_r=1.81$ \rm{m}, $C_{af}=60790$ \rm{N/rad}, $C_{ar}=50400$ \rm{N/rad}, $g=9.8$ \rm{m/$\rm{s}^2$}, $I_z=3524.9$ \rm{kg$\cdot$m}. Gaussian kernel function is adopted, i.e., $k(s_i,s_j)=\text{exp}(-\lVert s_i-s_j \rVert^2/\tau^2)$. The kernel width $\tau$ used to construct the dictionary is $0.9$, and a vector $\tau=[0.7,0.8,0.9,1]$ is adopted to construct the multikernel feature. The penalty coefficient $\mu$ is set to $6$ for an obstacle-avoidance policy and $\mu=0$ for a control policy. 
\subsection{Simulation Settings of Comparison Approaches}\label{Appendix_Settings}
Comparison approaches, including MPC-CBF, LMPCC, and CFS, use the IPOPT\cite{wachter2006implementation} solver. They all consider the nonlinear vehicle dynamics~\eqref{vehicle_dynamics} to compare the performance. Kd-RRT$^\star$ approach solves the planning problem through iterative optimization. For CFS, we used the obstacle-avoidance idea in \cite{liu2018convex} to establish the obstacle constraints and implemented a kinodynamic motion planning approach; see Sec. IV-E.  For MPC-CBF, the extended class $\mathcal{K}_{\infty}$ function $\gamma$ is set to $0.4$ in the first scenario and $0.6$ in the second one. For LMPCC, the potential function is included in the cost, i.e., 
\begin{equation*}
	\begin{array}{c}
		J_{\text {repulsive }}=Q_{R} \sum_{k=1}^{N_\text{obs}}\left(\frac{1}{\left(\Delta x_{k}\right)^{2}+\left(\Delta y_{k}\right)^{2}+\epsilon}\right),
	\end{array}
\end{equation*}
where the coefficient $Q_R$ is set to $5000$ in the first scenario and $30000$ in the second scenario, $N_\text{obs}$ is the number of  moving obstacles, $\Delta x_k,\Delta y_k$ represents the distance from the vehicle to the center of the $k$-th moving obstacle, and the parameter $\epsilon$ is set to $0.001$. In Scenario II, the first elliptical obstacle moves if $X>30$, while the second obstacle moves if $X>80$. The simulation was performed on a laptop running MATLAB 2020a with an i7-11800H CPU @2.30GHz without parallel acceleration.
\subsection{Proof of Theorem~\ref{thm1}}\label{AppendixProof}
	(Proof of Theorem~\ref{thm1}): According to Eq.~\eqref{target_lambda}, the costate variable at the $i$-th iteration can be computed by
	\vspace{1mm}
	\begin{equation*}
		\begin{array}{c}
			\lambda_k^{[i]}= 2Q\mathbf{x}_k+\gamma A_{d,k}^{\top}\frac{\partial V_i\left({\mathbf{x}}_{k+1} \right)}{\partial {\mathbf{x}}_{k+1}}+\mu\frac{\partial{\mathcal{B}({x}_k)}}{\partial{\mathbf{x}_k}},
		\end{array}
	\end{equation*}
	where $\mu\frac{\partial{\mathcal{B}({x}_k)}}{\partial{\mathbf{x}_k}}$ is bounded.
	As $i\rightarrow\infty$, the costate variable $\lambda_{\infty}$ can be written as
	\begin{equation*}\label{Lambda_infty}
		\begin{array}{c}
			\lambda_{\infty}(\mathbf{x}_k)= 2Q\mathbf{x}_k+\gamma A_{d,k}^{\top}\frac{\partial V_{\infty}\left({\mathbf{x}}_{k+1} \right)}{\partial {\mathbf{x}}_{k+1}}+\mu\frac{\partial{\mathcal{B}({x}_k)}}{\partial{\mathbf{x}_k}}.
		\end{array}
	\end{equation*}
	
	{Using Lemma in \cite{al2008discrete}, as $i\rightarrow\infty$, $V_{\infty}(\mathbf{x}_k)\rightarrow V^*(\mathbf{x}_k)$. The costate $\lambda_{\infty}(\mathbf{x}_k)$ can be rewritten as
	\begin{equation*}
		\begin{aligned}
			\lambda_{\infty}(\mathbf{x}_k)&= \begin{array}{c}2Q\mathbf{x}_k+\gamma A_{d,k}^{\top}\frac{\partial V^*\left( {\mathbf{x}}_{k+1} \right)}{\partial {\mathbf{x}}_{k+1}}+\mu\frac{\partial{\mathcal{B}({x}_k)}}{\partial{\mathbf{x}_k}}\end{array}\\
			&=\begin{array}{c}2Q\mathbf{x}_k+\gamma A^{\top}_{d,k}\lambda^*({\mathbf{x}}_{k+1})+\mu\frac{\partial{\mathcal{B}({x}_k)}}{\partial{\mathbf{x}_k}}
			\end{array}=\lambda^{*}(\mathbf{x}_k).
		\end{aligned}
	\end{equation*}}
	
	{That is, $\lambda_k^{[i]}$ will converge to $\lambda^{*}_k$, as $i\rightarrow\infty$.
	After we substitute the optimal costate $\lambda^*$ into~\eqref{optimal_action}, it follows that $\mathbf{u}^{[i]}\rightarrow \mathbf{u}^*$ as $i\rightarrow\infty$. As the number of iterations goes to infinity, $\hat{\mathbf{\Lambda}}^{[i]}\rightarrow\mathbf{\Lambda}^*$ and $\hat{\mathbf{U}}^{[i]}\rightarrow \mathbf{U}^*$. $\hfill\blacksquare$}
	\bibliographystyle{IEEEtran}
\bibliography{IEEEabrv.bib}
\end{document}